\documentclass{article}

\PassOptionsToPackage{numbers, compress}{natbib}
\usepackage[preprint]{neurips_2026}

\usepackage[utf8]{inputenc} 
\usepackage[T1]{fontenc}    
\usepackage{hyperref}       
\usepackage{url}            
\usepackage{caption}
\usepackage{booktabs}       
\usepackage{multirow}
\usepackage{colortbl}
\usepackage{wrapfig}
\usepackage{amsfonts}       
\usepackage{nicefrac}       
\usepackage{microtype}      
\usepackage{xcolor}         
\hypersetup{
    citebordercolor=cyan, 
    linkbordercolor=cyan, 
    urlbordercolor=cyan   
}
\usepackage{graphicx}
\usepackage{amsmath,amssymb,bm,amsthm}
\usepackage{subcaption}
\usepackage{enumitem}
\usepackage{tcolorbox}

\newtheorem{definition}{Definition}
\newtheorem{proposition}{Proposition}
\newtheorem{lemma}{Lemma}
\newtheorem{theorem}{Theorem}

\title{Self-Distilled Trajectory-Aware Boltzmann Modeling: Bridging the Training-Inference Discrepancy in Diffusion Language Models}

\author{
  Kecheng Chen$^{1,\bigstar}$, Ziru Liu$^{2,\bigstar,\spadesuit}$, Xijia Tao$^{3}$,
  Hui Liu$^{1}$, Yibing Liu$^{1}$, Xinyu Fu$^{2}$, Shi Wu$^{2}$,\\ \textbf{Suiyun Zhang}$^{2}$,  \textbf{Dandan Tu}$^{2,\clubsuit}$,
  \textbf{Lingpeng Kong}$^{3}$, \textbf{Rui Liu}$^{2,\clubsuit}$, \textbf{Haoliang Li}$^{1,\clubsuit}$ \\
  \\
  $^{1}$City University of Hong Kong,
  $^{2}$Huawei Research,
  $^{3}$The University of Hong Kong
  \\
  Email: \texttt{tudandan@huawei.com};  \texttt{liu.rui2@huawei.com}; \texttt{ haoliang.li@cityu.edu.hk}\\
}

\begin{document}

\maketitle
\footnotetext[1]{$^{\bigstar}$ Contribute equally to this work. $^{\clubsuit}$ Corresponding authors. $^{\spadesuit}$ Project Lead.}

\begin{abstract}
  Diffusion Language Models (DLMs) have recently emerged as a promising alternative to autoregressive language models, offering stronger global awareness and highly parallel generation. 
  However, post-training DLMs with standard Negative Evidence Lower Bound (NELBO)-based supervised fine-tuning remains inefficient: training reconstructs randomly masked tokens in a single step, whereas inference follows a confidence-guided, multi-step easy-to-hard denoising trajectory. 
  Recent trajectory-based self-distillation methods exploit such inference trajectories mainly for sampling-step compression and acceleration, often improving decoding efficiency without substantially enhancing the model's underlying capability, and may even degrade performance under full diffusion decoding. In this work, we ask whether self-distilled trajectories can be used not merely for faster inference, but for genuine knowledge acquisition. 
  Although these trajectories lie on the pretrained DLM's own distributional manifold and thus offer a potentially lower optimization barrier, we find that naively fine-tuning on them with standard NELBO objectives yields only marginal gains. 
  To address this limitation, we propose \textbf{T}rajectory-\textbf{A}ligned optimization via \textbf{Bo}ltzmann \textbf{M}odeling (\textbf{TABOM}), a self-distilled trajectory-based post-training framework that aligns training with the easy-to-hard structure of inference. 
  TABOM models the inference unmasking preference as a Boltzmann distribution over predictive entropies and derives a tractable pairwise ranking objective to align the model's certainty ordering with the observed decoding trajectory. 
  Empirically, TABOM achieves substantial gains in new domains, expands the effective knowledge boundary of DLMs, and significantly mitigates catastrophic forgetting compared with standard SFT.
\end{abstract}

\section{Introduction}
\label{sec:Introduction}

\begin{figure}[t]
  \centering
  \includegraphics[width=\textwidth]{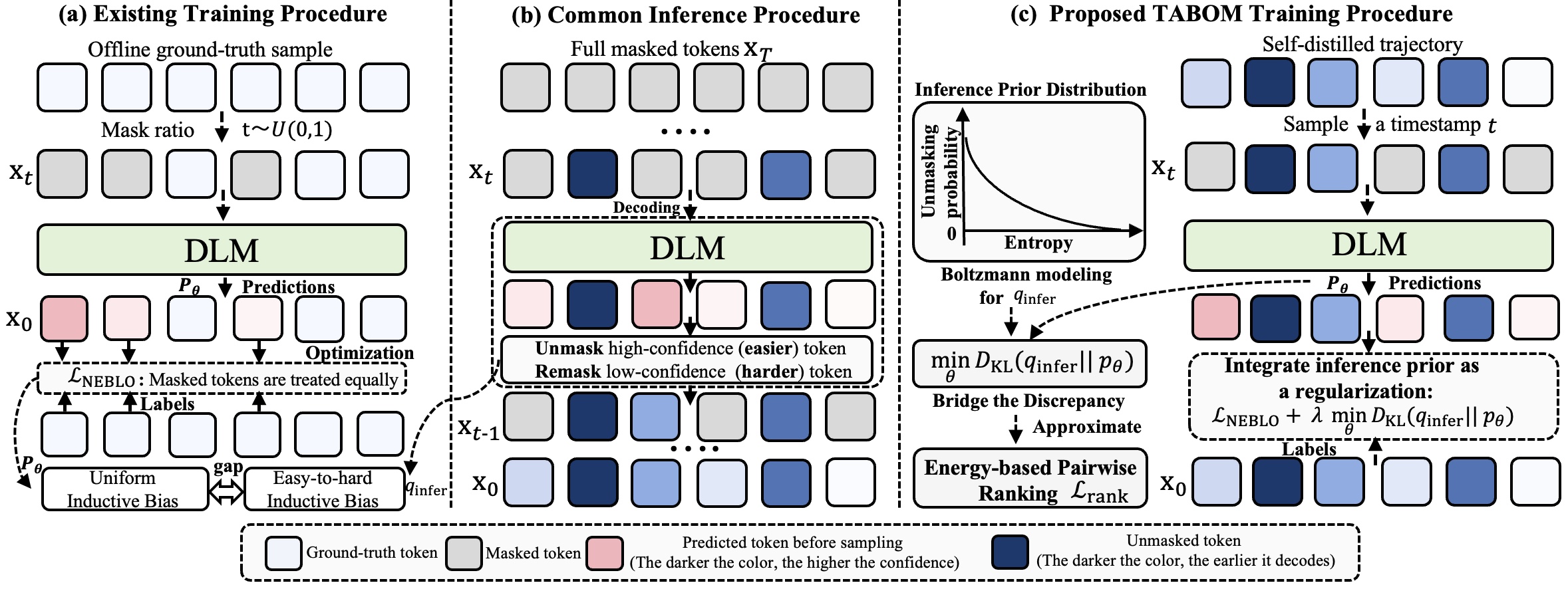}
  \caption{Overview of the proposed TABOM framework. Standard SFT trains DLMs with uniformly random masking, which induces a mismatch with the easy-to-hard unmasking trajectory used during inference. TABOM instead learns from self-distilled trajectories by combining trajectory-aware reconstruction with an energy-based pairwise ranking loss, aligning the model's entropy landscape with the inference-time Boltzmann unmasked distribution.}
  \label{fig:framework} 
  \vspace{-0.5cm}
\end{figure}

Modern Large Language Models (LLMs) have demonstrated exceptional capabilities across diverse tasks, including mathematical reasoning, coding, and multi-turn dialogue \citep{kimiteam2025kimik2openagentic,openAI-o3,deepseekai2025deepseekv3technicalreport}. 
While the prevailing paradigm relies on autoregressive transformer architectures \citep{vaswani2023attentionneed}, 
Diffusion Language Models (DLMs) have emerged as a compelling alternative \citep{ye2025dream, nie2025large}. 
By leveraging bidirectional contexts and incorporating information from all positions simultaneously, DLMs offer superior global awareness and the potential for highly parallel token generation. 

To further enhance pretrained DLMs, supervised fine-tuning (SFT) \citep{ye2025dream} is commonly applied by training the model to predict randomly masked tokens in a single forward pass. However, this objective is substantially less sample-efficient than autoregressive training and introduces a pronounced training-inference mismatch, since inference requires iterative denoising along a decoding trajectory. A natural remedy is therefore to directly learn high-quality decoding trajectories, which requires both a scalable trajectory generation pipeline and an efficient trajectory-level learning objective. 

Although reinforcement learning (RL) may seem relevant to this goal \citep{ou2025principledrldiffusionllms,yang2026darediffusionlargelanguage,zhao2025d1scalingreasoningdiffusion,yang2025tamingmaskeddiffusionlanguage}, it is not a direct solution. Outcome-reward RL provides only coarse sequence-level supervision, making credit assignment over intermediate denoising decisions difficult. It also requires repeated rollouts and reward evaluation, which is especially costly for DLMs since each rollout already involves multi-step denoising. Recent work has explored trajectory-based self-distillation for DLM post-training \citep{ma2025dinferefficientinferenceframework,song2025seeddiffusionlargescalediffusion,zhang2026t3d}, where the model learns from decoding traces generated by itself. However, existing methods are primarily designed for sampling-step compression. By learning shortcut transitions from later diffusion states to earlier ones, they mainly target \textit{inference efficiency}, allowing DLMs to decode with fewer steps while maintaining acceptable performance trade-offs, as demonstrated by Seed
Diffusion~\citep{song2025seeddiffusionlargescalediffusion} and dInfer~\citep{ma2025dinferefficientinferenceframework}. Despite these efficiency gains, recent evidence suggests (see Table 2 in \citet{zhang2026t3d} paper) that DLMs fine-tuned on these trajectories may suffer from degraded performance under full diffusion decoding, i.e., decoding one token per step \citep{zhang2026t3d}.

This observation motivates a more fundamental question: \textit{Beyond improving inference efficiency, can self-distilled trajectories enable genuine knowledge acquisition and performance gains?} Intuitively, because self-distilled trajectories are generated from the pretrained DLM's own distributional manifold, they may present a lower optimization barrier than externally constructed targets, thereby facilitating smoother absorption of new knowledge during
fine-tuning. However, our preliminary investigations show that naively fine-tuning DLMs on self-distilled trajectories with the standard Negative Evidence Lower Bound (NELBO) objective \citep{yang2026darediffusionlargelanguage} yields only marginal improvements.

To fully exploit self-distilled trajectories for performance enhancement, we propose \textbf{Trajectory-Aligned Optimization via Boltzmann Modeling} (TABOM), a novel post-training framework for DLMs. TABOM leverages the structured decoding patterns embedded in self-distilled trajectories to align the model's predictive distribution with its actual inference-time behavior. We theoretically formulate this target behavior as a Boltzmann distribution over the ideal predictive entropy of each token, which serves as a surrogate for the easy-to-hard inductive bias and effectively mitigates the training-inference discrepancy. The main contributions of this work are summarized as follows:
\begin{itemize}[leftmargin=*]
\item (Section \ref{sec:tao}) Theoretically, we demonstrate that the inference unmasked distribution under easy-to-hard decoding schedules can be explicitly modeled as a Boltzmann distribution over the ideal predictive entropy of each token. This allows us to formulate the training-inference alignment as a direct Kullback-Leibler (KL) divergence minimization problem.

\item (Section \ref{sec:pairwise_ranking}) Methodologically, we introduce a tractable surrogate objective based on Pairwise Ranking to optimize the intractable global KL divergence. This mechanism enforces local entropy gradients that strictly adhere to the easy-to-hard decoding schedule, smoothly transferring new knowledge into the model without disrupting its inherent predictive manifold.

\item (Section \ref{sec:Experiments1}) Experimentally, we show that TABOM resolves the central SFT dilemma identified in this paper: it turns self-distilled trajectories into substantial performance gains, while preserving the pretrained model's out-of-distribution capabilities and avoiding the catastrophic forgetting often induced by standard supervised fine-tuning.

\item (Section \ref{sec:tds}) We introduce Trajectory Discrimination Score (TDS) to quantify whether a model preserves token-level uncertainty differences along the decoding trajectory. Qualitative (see Figure \ref{fig:trajectory_discrimination_score}) and quantitative (see Table \ref{tab:tds_f100}) TDS results reveal that TABOM reshapes the entropy landscape toward the easy-to-hard inference bias, rather than merely reusing self-distilled samples.
\end{itemize}

\begin{figure}[t]
  \centering
  \begin{subfigure}{0.48\linewidth}
    \centering
    \includegraphics[width=\linewidth]{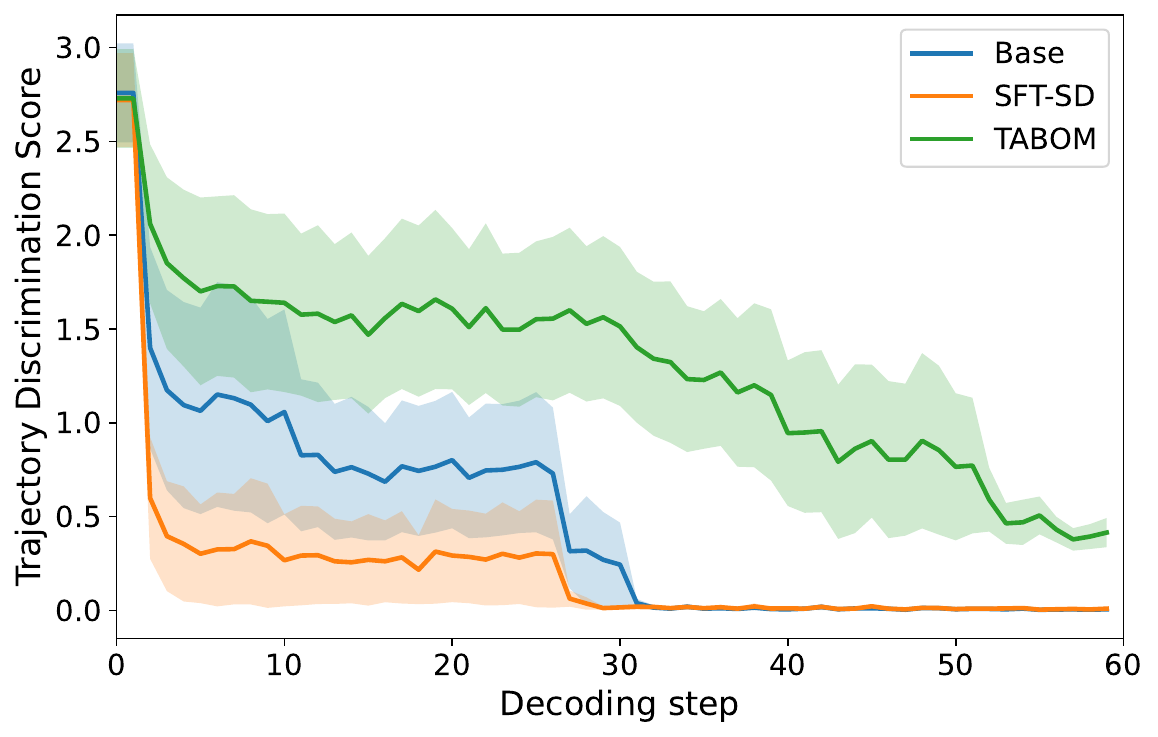}
    \caption{MBPP}
  \end{subfigure}
  \hfill
  \begin{subfigure}{0.48\linewidth}
    \centering
    \includegraphics[width=\linewidth]{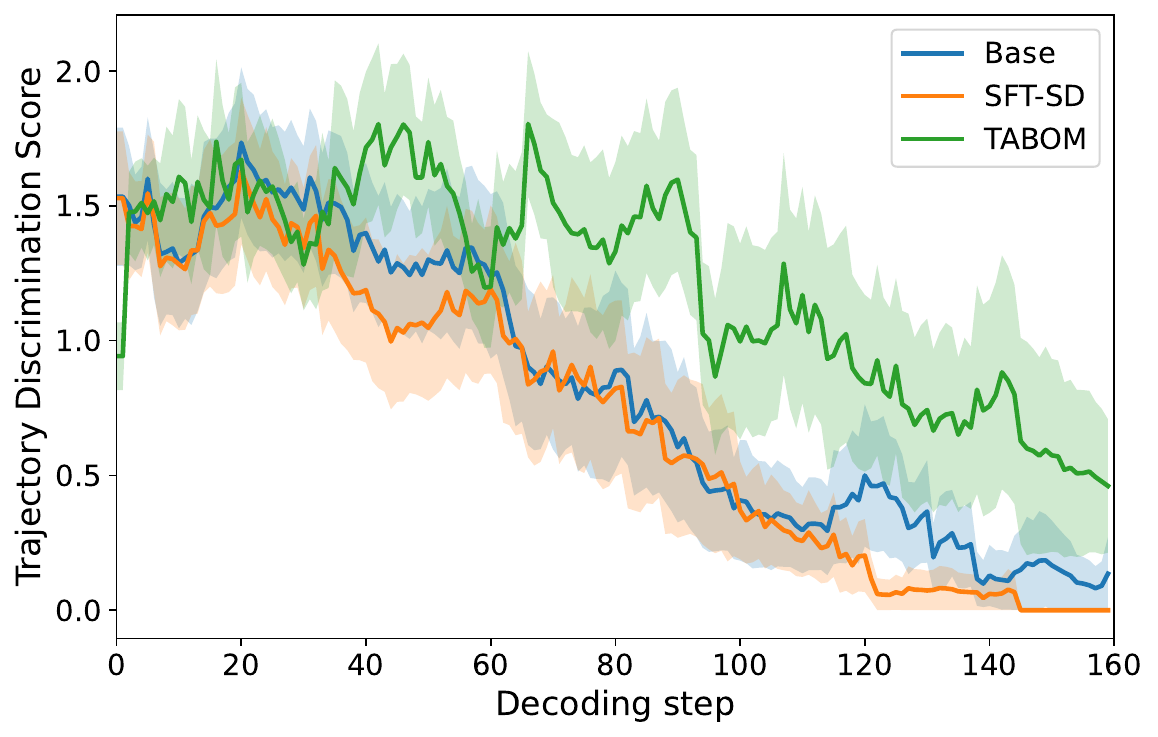}
    \caption{GSM8K}
  \end{subfigure}
  \caption{Trajectory Discrimination Score during decoding on Dream. We compute the variance of predictive entropy over decoding-time masked tokens, average it over 64 sampled trajectories, and exclude steps after the first EOS token in each trajectory. Higher curves indicate stronger token-level uncertainty discrimination along the trajectory. ``Base" denotes the original model without SFT.} 
  \label{fig:trajectory_discrimination_score}
\end{figure}

\section{Empirical Observations on Self-Distilled Trajectories}
\label{sec:obs}
\begin{wraptable}{r}{0.55\textwidth}
  \vspace{-0.4cm}
  \centering
  \resizebox{\linewidth}{!}{
  \begin{tabular}{lcccc}
  \toprule
  \textbf{Method} & \textbf{HumanEval} & \textbf{MBPP} & \textbf{GSM8K} & \textbf{MATH500} \\
  \midrule
  \multicolumn{5}{c}{\textit{Code SFT (Ling-Coder-SFT)}} \\
  \midrule
  No-SFT & 52.66 & 58.00 & 81.41 & 39.80 \\
      SFT-GT & 61.55$_{\color{green!60!black}{\mathbf{+8.89}}}$ & 58.00$_{\color{gray}{\mathbf{+0.00}}}$ & 52.33$_{\color{red}{\mathbf{-29.08}}}$ & 32.40$_{\color{red}{\mathbf{-7.40}}}$ \\
      SFT-SD & 53.66$_{\color{green!60!black}{\mathbf{+1.00}}}$ & 59.20$_{\color{green!60!black}{\mathbf{+1.20}}}$ & 81.81$_{\color{green!60!black}{\mathbf{+0.40}}}$ & 41.60$_{\color{green!60!black}{\mathbf{+1.80}}}$ \\
      TABOM (Ours) & 60.36$_{\color{green!60!black}{\mathbf{+7.70}}}$ & 60.60$_{\color{green!60!black}{\mathbf{+2.60}}}$ & 81.73$_{\color{green!60!black}{\mathbf{+0.32}}}$ & 42.40$_{\color{green!60!black}{\mathbf{+2.60}}}$ \\
      \midrule
      \multicolumn{5}{c}{\textit{Math SFT (MixChain-Z-PRM12K)}} \\
      \midrule
      No-SFT & 52.66 & 58.00 & 81.41 & 39.80 \\
      SFT-GT & 46.34$_{\color{red}{\mathbf{-6.32}}}$ & 58.00$_{\color{gray}{\mathbf{+0.00}}}$ & 80.12$_{\color{red}{\mathbf{-1.29}}}$ & 37.40$_{\color{red}{\mathbf{-2.40}}}$ \\
      SFT-SD & 57.92$_{\color{green!60!black}{\mathbf{+5.26}}}$ & 58.60$_{\color{green!60!black}{\mathbf{+0.60}}}$ & 81.95$_{\color{green!60!black}{\mathbf{+0.54}}}$ & 39.80$_{\color{gray}{\mathbf{+0.00}}}$ \\
      TABOM (Ours) & 58.54$_{\color{green!60!black}{\mathbf{+5.88}}}$ & 59.20$_{\color{green!60!black}{\mathbf{+1.20}}}$ & 84.31$_{\color{green!60!black}{\mathbf{+2.90}}}$ & 41.10$_{\color{green!60!black}{\mathbf{+1.30}}}$ \\
  \bottomrule
  \end{tabular}
  }
  \caption{Performance comparison on Dream. SFT-SD avoids catastrophic forgetting but yields marginal in-domain gains compared to SFT-GT. TABOM achieves the best of both worlds.}
  \label{tab:sft_results}
  \vspace{-0.2cm}
\end{wraptable}

To understand the impact of self-distillation on model optimization and its limitations under standard SFT paradigms,
 we conduct a comparative analysis between self-distilled (SD) data and offline ground-truth (GT) data. 
 Specifically, we conduct experiments on the Dream model \citep{ye2025dream} across two domains: code generation and mathematical reasoning. For code generation, we randomly sample 17K queries from Ling-Coder-SFT \citep{inclusionAI_ling_coder}. For mathematical reasoning, we utilize queries from MixChain-Z-PRM12K \citep{horseee_mixchain}. For each query, the offline ground-truth (GT) data consists of the standard problem-answer pairs originally provided in the datasets. In contrast, the self-distilled (SD) data is generated by having the base model decode the answers using entropy-based decoding, yielding $\sim$3.8K and $\sim$5.1K valid SD trajectories for code generation and mathematical reasoning, respectively.

\begin{figure}[h]
  \centering
  \begin{subfigure}{0.45\textwidth}
      \centering
      \includegraphics[width=\textwidth]{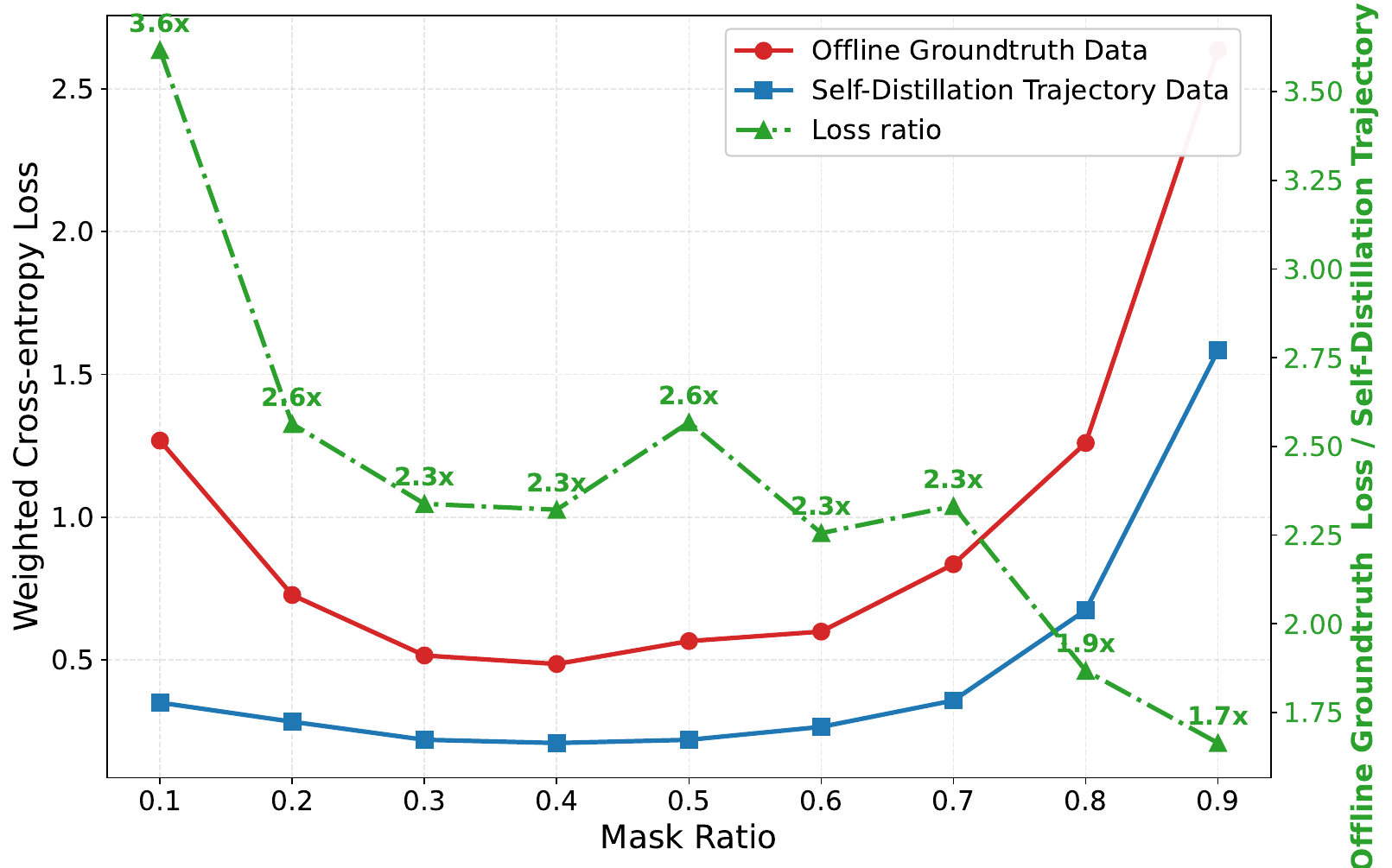}
      \caption{Code Generation}
      \label{fig:loss_code}
  \end{subfigure}
  \hfill
  \begin{subfigure}{0.45\textwidth}
      \centering
      \includegraphics[width=\textwidth]{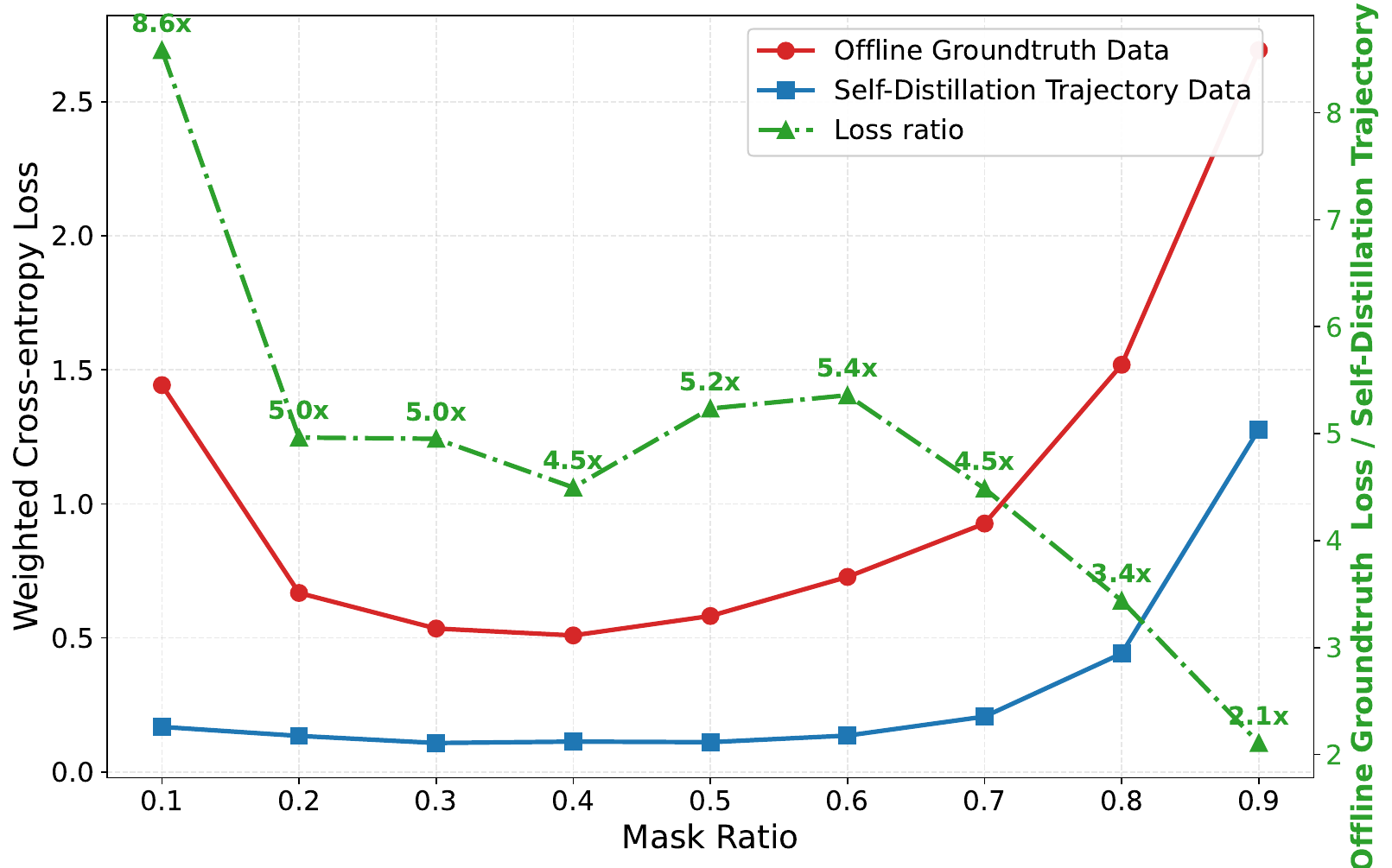}
      \caption{Math Reasoning}
      \label{fig:loss_math}
  \end{subfigure}
  \caption{Comparison of Cross-Entropy loss between GT and SD data across different mask ratios.}
  \label{fig:loss_comparison}
\end{figure}

\textbf{Lower Optimization Barrier.} First, we evaluate the model on the same queries using either GT data or SD data across various mask ratios. As illustrated in Figure \ref{fig:loss_comparison}, the cross-entropy (CE) loss for SD data is consistently lower than that of GT data. This suggests that self-distillation aligns the target distribution with the model's intrinsic predictive manifold, providing a smoother optimization landscape and lowering the optimization barrier.

\textbf{The Dilemma: Forgetting vs. Marginal Gains.} To investigate how this lower barrier translates to downstream performance, we evaluate the models fine-tuned on these datasets. As shown in Table \ref{tab:sft_results}, directly fine-tuning on GT data (SFT-GT) improves in-domain performance (e.g., HumanEval increases to 61.55) but suffers from severe catastrophic forgetting out-of-domain (e.g., GSM8K drops to 52.33). Conversely, fine-tuning on self-distilled trajectories (SFT-SD) effectively prevents catastrophic forgetting (GSM8K remains at 81.81). This is reasonable as the self-distilled trajectories are generated by the model itself, thereby preserving its inherent predictive manifold. However, the in-domain performance improvement is marginal (HumanEval with only 53.66). This empirical dilemma implies the following observation:

\begin{tcolorbox}[colback=gray!10, colframe=black, arc=4pt, boxrule=0.5pt, left=4pt, right=4pt, top=4pt, bottom=4pt]
\textit{Despite offering a lower optimization barrier, naively fine-tuning on self-distilled trajectories is insufficient to unlock substantial performance gains.}
\end{tcolorbox}

\section{Self-Distilled Trajectory-Aware Boltzmann Modeling}
\label{sec:Methodology}

\textbf{Motivation.} In this paper, we argue that the fundamental training-inference discrepancy inherent in the current NELBO-based SFT paradigm leads to marginal performance gains, despite utilizing self-distilled trajectories with a lower optimization barrier. We provide specific analysis as follows. 

Let $\overline{\mathbb{X}} = \mathbb{X} \cup \{\operatorname{M}\}$ be the extended vocabulary, where $\operatorname{M}$ denotes an absorbing mask token. Let $\mathbf{s} \in \mathbb{X}^{L}$ be a prompt sequence, 
and let $\mathbf{x}_{t} = \{x_{t}^{1}, \dots, x_{t}^{N}\} \in \overline{\mathbb{X}}^{N}$ denote the response sequence at time step $t \in \{0, \dots, T\}$, 
where $x_{t}^{r}$ is the token at position $r$. The process evolves from a fully masked state $\mathbf{x}_{T} = \{\operatorname{M}\}^{N}$ to a fully unmasked state $\mathbf{x}_{0} \in \mathbb{X}^{N}$ 
drawn from a training dataset $\mathcal{D}$. We use $\mathcal{I}=\{1,\dots,N\}$ for the token index set, reserve $t,t'$ for decoding timesteps, and reserve $r,s,k$ for token indices. Let $U_t \subseteq \mathcal{I}$ and $M_t=\mathcal{I}\setminus U_t$ denote the unmasked and masked index sets at timestep $t$, respectively. 
For convenience, we use $\mathbf{x}_{0}^{U_t}$ to represent the visible context formed by the unmasked tokens in $U_t$. 
Standard training of DLMs typically uses the NELBO objective with uniform masking, 
where the unmasked set $U$ is sampled uniformly to serve as the visible context, and the masked tokens in $M=\mathcal{I}\setminus U$ are reconstructed in a single step:
\begin{equation}
  \label{reconstruction loss}
  \mathcal{L}_{\mathrm{NELBO}} = \mathbb{E}_{\mathbf{x}_0 \sim \mathcal{D}} \, \mathbb{E}_{U \sim q_{\mathrm{unif}}(U)} \left[ \frac{1}{|M|} \sum_{r \in M} -\log p_\theta(x_{0}^{r} \mid \mathbf{x}_{0}^{U}, \mathbf{s}) \right]
\end{equation}
\begin{proposition}[Uniform Inductive Bias of NELBO]
\label{prop:uniform_bias}
Under the NELBO objective, the model is optimized to reconstruct all masked tokens simultaneously given a uniformly sampled context. This optimization inherently instills a uniform inductive bias that treats all tokens equally regardless of their inherent prediction difficulty, thereby failing to capture the sequential dependencies and certainty gradients required for coherent generation.
\end{proposition}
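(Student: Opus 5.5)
To make Proposition~\ref{prop:uniform_bias} precise, I first turn it into a statement about the population minimizer of $\mathcal{L}_{\mathrm{NELBO}}$ in Eq.~(\ref{reconstruction loss}), and then show that this minimizer is invariant under permutations of token indices. Concretely, I assume the model family is rich enough to realize any conditional distribution. For each fixed pair $(\mathbf{x}_0,U)$, the inner objective is a sum of per-position cross-entropies. Each term is weighted by the same factor $1/|M|$, and the weight does not depend on $r$. The minimizer therefore decouples across positions: for every $r\in M$,
\[
p_\theta^\star(\cdot\mid \mathbf{x}_0^{U},\mathbf{s})=q(x_0^r=\cdot\mid \mathbf{x}_0^{U},\mathbf{s}).
\]
This is the data marginal of position $r$ given the visible context, and it is obtained by the standard Gibbs-inequality argument for cross-entropy. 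This first step is routine.

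The second step formalizes ``treats all tokens equally.'' Let $q_{\mathrm{unif}}$ be any law on $U$ that is exchangeable under permutations $\pi$ of $\mathcal{I}$; this covers the usual schedule where the mask ratio is sampled and the positions are then chosen uniformly. I then show that the gradient weight each position $r$ receives is identical. Specifically, the probability that $r\in M$, and the conditional law of the context $U$ given $r\in M$, are both independent of $r$. Consequently, the training signal contains no term that depends on a difficulty statistic such as the conditional entropy
\[
H_r(U)=H\bigl(q(x_0^r\mid \mathbf{x}_0^U,\mathbf{s})\bigr).
\]
Put differently, no ordering functional enters the objective. If the positions are relabeled by their entropy rank, the loss does not change.

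The third step contrasts this with inference. Under entropy-based decoding, the context $U_t$ at step $t$ is not exchangeable: it is selected by $\arg\min_{r\in M_t} H_r(U_t)$, so the contexts actually visited form a measure-zero, or at least low-mass, subset of those weighted by $q_{\mathrm{unif}}$. I will exhibit this with a simple two-token example. Suppose one token is nearly deterministic and the other is high-entropy. NELBO places equal mass on the contexts that reveal the easy token and on those that reveal the hard token first, whereas inference only visits the former. The objective is therefore indifferent to calibrating the relative entropy gap, which is the ``certainty gradient'' the decoder relies on.

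The main obstacle is that the clause ``failing to capture sequential dependencies'' is informal, and literally false at the exact optimum, because the optimum recovers the true conditionals for every $U$. I will therefore state that clause as a finite-capacity claim. Under limited capacity, the approximation error is distributed according to $q_{\mathrm{unif}}$ rather than according to the inference visitation distribution, so no preference is given to preserving the entropy ordering along decoding trajectories. I will frame this as a distribution-mismatch bound in the style of covariate shift, and leave a tight quantitative version aside.
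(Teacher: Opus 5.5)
The paper gives no proof of this proposition. It is asserted as a reading of the form of $\mathcal{L}_{\mathrm{NELBO}}$: $U$ is sampled uniformly and the weight $1/|M|$ does not depend on $r$. Its checkable content is the symmetry claim, and your second step captures that, with two corrections. First, the law of $U$ given $r\in M$ is \emph{equivariant} under the transposition of $r$ and $s$, not independent of $r$. Second, what is difficulty-agnostic is the sampling and weighting, not the per-position losses and gradients, which do grow with $H_r(U)$.

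The genuine gap is in the clause that carries the proposition, ``failing to capture \ldots certainty gradients.'' As you note, your first step refutes it at the optimum, and the refutation is robust. The decoupling comes from the factorized output and full capacity, not from $1/|M|$ being $r$-independent. Hence any positive weight that is a function of $(r,U,\mathbf{x}_0^U)$, e.g.\ of $H_r(U)$ itself, gives the same minimizer. So does replacing $q_{\mathrm{unif}}$ by $q_{\mathrm{infer}}$, whose visit events depend only on the revealed tokens, on its support. Neither the uniform weighting nor the train/inference mismatch has any effect at the optimum. In particular, your two-token example involves no misspecification, so it shows a difference in visitation mass but no failure: NELBO fits both entropies at $U=\emptyset$ exactly, and the decoder reveals the easy token first. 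The visited contexts are also not measure-zero, since under standard schedules $q_{\mathrm{unif}}$ charges every $U\subsetneq\mathcal{I}$.

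The finite-capacity fallback does not close this gap, because a covariate-shift bound points the wrong way. Take the inequality $\mathbb{E}_{q_{\mathrm{infer}}}[\ell]\le\|q_{\mathrm{infer}}/q_{\mathrm{unif}}\|_\infty\,\mathbb{E}_{q_{\mathrm{unif}}}[\ell]$ for the per-context excess cross-entropy $\ell(U)$. Combined with Pinsker's inequality and continuity of entropy in total variation, it gives a sufficient condition for NELBO to \emph{preserve} the entropy order on visited contexts: the training excess risk, inflated by the density ratio, must be small compared with the entropy gaps. An upper bound of this kind can never certify failure.

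What is missing is a separation result. You would need an explicit misspecified class $\Theta$ and a context $U$ with $q_{\mathrm{infer}}(U)>0$ such that the $q_{\mathrm{unif}}$-weighted cross-entropy projection onto $\Theta$ collapses or inverts an order $H_r(U)<H_s(U)$. Some member of $\Theta$ should still preserve that order, ideally the one selected by a $q_{\mathrm{infer}}$-weighted or ranking-augmented objective. Without such a construction, you have only shown that the NELBO weighting is difficulty-agnostic. That is the defensible content of the proposition, so the failure clause should either be weakened to it or backed by a separation example.
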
 

Proposition \ref{prop:uniform_bias} mainly implies the uniform inductive bias of the existing NELBO objective. 

In contrast, actual inference decoding adopted by existing DLM models \citep{li2025convergencetheorydiffusionlanguage,ye2025dream,labs2025mercuryultrafastlanguagemodels} is a confidence- or entropy-guided multi-step process. Specifically, starting from an empty unmasked set $U_T = \emptyset$, each reverse decoding step $t$ selects a subset of tokens $\mathcal{J}_t$ of size $b_t$ from the masked set $M_t$ to unmask by minimizing the predictive entropy:
\begin{equation}
\label{eq:decoding}
    \mathcal{J}_{t} = \underset{\mathcal{S} \subset M_{t}, |\mathcal{S}|=b_t}{\arg\min} \sum_{r \in \mathcal{S}} \mathcal{H}_\theta(x_{0}^{r} \mid \mathbf{x}_{0}^{U_t}, \mathbf{s}), 
    \quad U_{t-1}=U_t\cup \mathcal{J}_t .
\end{equation}
where $\mathcal{H}_\theta$ denotes the predictive entropy, which can also be served as a smooth and robust approximation to confidence-based metrics (e.g., margin confidence), particularly given that LLM distributions tend to be highly overconfident \citep{sun2025largelanguagemodelsoverconfident}. By iteratively selecting tokens based on this certainty metric, the decoding mechanism fundamentally shapes the generation trajectory, which can be formally summarized as follows.

\vspace{-0.4\baselineskip}
\begin{proposition}[Easy-to-Hard Inductive Bias of Inference]
\label{prop:easy_to_hard_bias}
The entropy-guided decoding schedule inherently instills a strong easy-to-hard inductive bias, forming a stark contrast to the uniform inductive bias of the NELBO objective.
\end{proposition}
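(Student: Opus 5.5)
To make Proposition~\ref{prop:easy_to_hard_bias} precise, I will compare two distributions over the unmasked set $U$ that the model is conditioned on at a fixed step $t$. The first is the NELBO context law $q_{\mathrm{unif}}(U)$ from Eq.~\eqref{reconstruction loss}. The second is the inference context law $q_{\mathrm{inf}}(U_t)$ induced by the greedy rule in Eq.~\eqref{eq:decoding}. The plan is to show three things. First, under $q_{\mathrm{unif}}$ the event $\{r\in U\}$ is exchangeable across positions $r\in\mathcal{I}$, independent of any entropy values; this is the formal content of the ``uniform bias'' in Proposition~\ref{prop:uniform_bias}. Second, under $q_{\mathrm{inf}}$ the order in which positions enter $U$ is monotone in predictive entropy. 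Third, the two laws therefore differ whenever the entropies are not all tied.

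The first step is a direct computation. For $U\sim q_{\mathrm{unif}}$ and any fixed mask ratio, permutation invariance gives $\Pr[r\in U]=\Pr[s\in U]$ for all $r,s$. Moreover, the loss weights every $r\in M$ by the same factor $1/|M|$. So no position is preferentially revealed or preferentially supervised based on $\mathcal{H}_\theta$.

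The second step uses Eq.~\eqref{eq:decoding}. Minimizing a sum of entropies over subsets of size $b_t$ selects exactly the $b_t$ smallest values of $\mathcal{H}_\theta(x_0^r\mid\mathbf{x}_0^{U_t},\mathbf{s})$ over $r\in M_t$. Hence every $j\in\mathcal{J}_t$ and every $k\in M_{t-1}$ satisfy
\[
\mathcal{H}_\theta(x_0^j\mid \mathbf{x}_0^{U_t},\mathbf{s})\le \mathcal{H}_\theta(x_0^k\mid \mathbf{x}_0^{U_t},\mathbf{s}).
\]
This is the local ``easy before hard'' property at each step. Unrolling it from $U_T=\emptyset$ produces an ordered partition $\mathcal{J}_T,\mathcal{J}_{T-1},\dots$ of $\mathcal{I}$. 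Each block consists of the minimal-entropy tokens given the context formed by all earlier blocks. Consequently $q_{\mathrm{inf}}$ is (for deterministic decoding) a point mass on nested sets, or, for stochastic sampling of $\mathbf{x}_0$, a mixture concentrated on such entropy-sorted chains. It places zero mass on contexts $U$ that contain a token of higher conditional entropy while omitting a lower one. That zero-mass property gives the stark contrast with the first step: unless all entropies tie, $q_{\mathrm{inf}}\neq q_{\mathrm{unif}}$, and $\Pr_{\mathrm{inf}}[r\in U_t]$ is ordered by difficulty.

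The main obstacle is that the entropies are themselves conditional on the evolving context $U_t$. The ranking at step $t$ is therefore only a local ordering. A global statement such as ``token $r$ is always revealed before $s$ if it is easier'' need not hold, because conditioning on newly revealed tokens can reorder the remaining entropies. I would not claim a global ordering. Instead I would state the result step-wise, as the greedy invariant above, and phrase ``easy-to-hard'' as the trajectory-level chain condition. I would then note that relaxing the hard $\arg\min$ to a softmax over $-\mathcal{H}_\theta/\tau$ recovers the deterministic rule as $\tau\to 0$. This softened selection law is the Boltzmann form the paper later uses in Section~\ref{sec:tao}.
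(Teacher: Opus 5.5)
Your core step is the paper's: minimizing $\sum_{r\in\mathcal{S}}\mathcal{H}_\theta$ over $|\mathcal{S}|=b_t$ selects the $b_t$ lowest-entropy masked tokens given $\mathbf{x}_0^{U_t}$. The difference is in what you conclude from it.

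The paper's proof goes straight to the global claim that the unmasking sequence ``monotonically progresses from the most certain (easiest) tokens to the least certain (hardest) tokens,'' and it leaves the uniform side informal. You formalize the uniform side through exchangeability of $\{r\in U\}$ and the constant weight $1/|M|$. You then stop at the step-wise invariant for $j\in\mathcal{J}_t$, $k\in M_{t-1}$.

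Your caution is justified. Revealing $\mathcal{J}_t$ can sharply lower the entropy of tokens still masked, so the decoding-time entropies $h_\theta(j;\tau)$, read in decoding order, need not be nondecreasing. The paper's global statement therefore needs an extra assumption such as context-free difficulty, which is exactly the idealization $\mathcal{H}_{\mathrm{ideal}}$ adopted in Proposition~\ref{prop:boltzmann}.

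The trade-off is this. The paper's stronger, idealized claim is what licenses the later pairwise targets $h_\theta(r;\tau)<h_\theta(s;\tau)$ for $\operatorname{step}_\tau(r)>\operatorname{step}_\tau(s)$. For tokens decoded at different model steps, those two entropies are evaluated at different contexts $U_{\operatorname{step}_\tau(r)}\subsetneq U_{\operatorname{step}_\tau(s)}$. Your invariant is provably true but certifies only same-context comparisons. It thus makes explicit that those targets are a heuristic extrapolation, plausible within a short window $W$.

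Your third step needs three small fixes.
\begin{itemize}
\item The claim that $q_{\mathrm{inf}}\neq q_{\mathrm{unif}}$ unless all entropies tie can fail for marginals. Take $N=3$, one token per step, and a full tie at $U_T=\emptyset$ broken uniformly at random. Add a strict cyclic preference at each singleton: $\{1\}\to\{1,2\}$, $\{2\}\to\{2,3\}$, $\{3\}\to\{1,3\}$. Every per-step marginal is then uniform over sets of that size. State the contrast instead at the level of the transition $U_t\mapsto U_{t-1}$, which is supported on argmin sets rather than on all $\binom{|M_t|}{b_t}$ extensions. Alternatively, use your own point-mass observation for deterministic decoding, which already separates the laws whenever $0<|U_t|<N$.
\item Drop ``$\Pr_{\mathrm{inf}}[r\in U_t]$ is ordered by difficulty.'' It presupposes the context-free difficulty you rightly decline to assume.
\item Treat the softmax remark as heuristic only. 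The paper's Boltzmann law ranges over all subsets with no cardinality constraint and uses a single per-token entropy, not a per-step, fixed-size, context-dependent selection law. Also, $\tau$ already denotes a trajectory in the paper.
\end{itemize}
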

\vspace{-0.7\baselineskip}
\begin{proof}
Let the prediction difficulty of a token $r$ be quantified by its predictive entropy $\mathcal{H}_\theta(x_{0}^{r} \mid \mathbf{x}_{0}^{U_t}, \mathbf{s})$. A lower entropy indicates higher predictive certainty, corresponding to an ``easier'' token. At any step $t$, the decoding objective (Eq.~\ref{eq:decoding}) explicitly minimizes this entropy, thereby selecting the easiest remaining tokens in the masked set $M_t$ to transition into the next unmasked set $U_{t-1}$. Consequently, the unmasking sequence monotonically progresses from the most certain (easiest) tokens to the least certain (hardest) tokens, establishing an implicit easy-to-hard inductive bias.
\end{proof}
\vspace{-0.7\baselineskip}
To formalize this discrepancy between the uniform training bias and the easy-to-hard inference bias, we explicitly define the target distribution generated during inference.

\begin{definition}[Inference Unmasked Distribution]
\label{def:infer_dist}
For a base DLM $\theta_{\mathrm{base}}$, let a self-distilled trajectory be $\tau=(U_T,U_{T-1},\ldots,U_0)$, where $U_T=\emptyset$, $U_0=\mathcal{I}$, and $U_{t-1}\supseteq U_t$. We denote the collection of such trajectories by $\mathcal{T}_{\text{gold}}$. The distribution $q_{\mathrm{infer}}(U \mid \mathbf{x}_0, \theta_{\mathrm{base}})$ denotes the marginal distribution over unmasked states $U$ visited by these entropy-guided trajectories. For a token $r$, $\operatorname{step}_{\tau}(r)$ denotes the reverse timestep immediately before $r$ is decoded; if several tokens are decoded in the same model step, ties are resolved by their entropy ranks within that step. Thus, $q_{\mathrm{infer}}$ concentrates probability mass on states reachable through high-certainty, easy-to-hard unmasking decisions.
\end{definition}

Consequently, a severe distribution discrepancy exists between training and inference: $D_{\mathrm{KL}}(q_{\mathrm{infer}}(U \mid \mathbf{x}_0, \theta_{\mathrm{base}}) \,\|\, q_{\mathrm{unif}}(U)) \gg 0$. 
This large divergence arises because the training distribution $q_{\mathrm{unif}}$ assigns equal probability to all possible unmasking combinations, 
whereas the inference distribution $q_{\mathrm{infer}}$ is highly peaked, concentrating its mass on a narrow set of easy-to-hard trajectories. 
This theoretical gap derives the following observation:
\begin{tcolorbox}[colback=gray!10, colframe=black, arc=4pt, boxrule=0.5pt, left=4pt, right=4pt, top=4pt, bottom=4pt]
\textit{Although directly utilizing inference trajectories for SFT provides the model with exposure to intermediate reasoning states, this fundamental distribution discrepancy prevents the model from fully exploiting the structured information embedded within the trajectories, leading to sub-optimal performance gains.}
\end{tcolorbox}

\subsection{Trajectory-Aligned Optimization via Boltzmann Modeling}
\label{sec:tao}

To resolve this distribution discrepancy and fully unlock the potential of self-distilled trajectories, our core idea is to explicitly model the inference-time decoding behavior $q_{\mathrm{infer}}(U \mid \mathbf{x}_0, \theta_{\mathrm{base}})$ and integrate it into the SFT process. By doing so, we can reduce the discrepancy between training and inference, aligning the model's optimization landscape with its actual generation dynamics. 
\begin{samepage}
\begin{proposition}[Boltzmann Unmasked Distribution]
\label{prop:boltzmann}
Under the easy-to-hard decoding assumption, the idealized target inference distribution $q_{\mathrm{infer}}^{\star}$ can be modeled as a Boltzmann distribution over the ideal predictive entropy $\mathcal{H}_{\mathrm{ideal}}(x_{0}^{r})$ of each token with an inverse temperature parameter $\beta$:
\begin{equation}
  q_{\mathrm{infer}}^{\star}(U \mid \mathbf{x}_0) = \frac{1}{Z} \exp\left( -\beta \sum_{r \in U} \mathcal{H}_{\mathrm{ideal}}(x_{0}^{r}) \right)
  \label{eq:boltzmann}
\end{equation}
where $Z = \sum_{U' \subseteq \mathcal{I}} \exp\left( -\beta \sum_{r \in U'} \mathcal{H}_{\mathrm{ideal}}(x_{0}^{r}) \right)$ is the partition function.
\end{proposition}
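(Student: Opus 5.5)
The plan is to derive Eq.~\eqref{eq:boltzmann} as the solution of a maximum-entropy problem, which is the standard route by which Boltzmann distributions arise. The easy-to-hard assumption is formalized as a single moment constraint. Concretely, let $E(U) := \sum_{r \in U} \mathcal{H}_{\mathrm{ideal}}(x_0^r)$ be the ``energy'' of an unmasked state, i.e., the accumulated ideal uncertainty committed to by unmasking exactly the tokens in $U$. By Proposition~\ref{prop:easy_to_hard_bias}, entropy-guided decoding (Eq.~\ref{eq:decoding}) always commits the lowest-entropy remaining tokens first. Hence the states visited along trajectories in $\mathcal{T}_{\mathrm{gold}}$ have, for each cardinality, the minimal achievable $E(U)$. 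I would encode this in a distributionally agnostic way: among all distributions $q$ over the $2^N$ subsets $U \subseteq \mathcal{I}$, the idealized target should (i) have expected energy $\mathbb{E}_q[E(U)] = \bar{E}$ matching the (low) average energy of states actually visited by $q_{\mathrm{infer}}$, and (ii) otherwise be maximally noncommittal, i.e., maximize the Shannon entropy $-\sum_U q(U)\log q(U)$. The ``idealized'' qualifier lets us replace $\mathcal{H}_\theta(\cdot \mid \mathbf{x}_0^{U_t},\mathbf{s})$ by a context-free $\mathcal{H}_{\mathrm{ideal}}(x_0^r)$, so that $E$ is additive over tokens.

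The key steps, in order, are as follows. First, state the idealization: the per-token difficulty $\mathcal{H}_{\mathrm{ideal}}(x_0^r)$ is independent of the current context. Under this assumption, greedy selection in Eq.~\eqref{eq:decoding} reduces to sorting tokens by $\mathcal{H}_{\mathrm{ideal}}$, and the energy of a visited state is additive. Second, set up the Lagrangian
\[
\mathcal{L}(q,\beta,\lambda) = -\sum_U q(U)\log q(U) - \beta\Big(\sum_U q(U)E(U) - \bar E\Big) - \lambda\Big(\sum_U q(U) - 1\Big).
\]
Setting $\partial\mathcal{L}/\partial q(U)=0$ gives $q(U)\propto \exp(-\beta E(U))$, with $Z$ fixed by normalization. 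This yields exactly Eq.~\eqref{eq:boltzmann}. Third, check that this is the unique maximizer: the entropy is strictly concave, the constraints are linear, and the optimum is interior because every $q(U)>0$. Fourth, interpret $\beta$. As $\beta\to 0$ the distribution recovers $q_{\mathrm{unif}}$, the NELBO bias of Proposition~\ref{prop:uniform_bias}. As $\beta\to\infty$, with the cardinality $|U|$ fixed by the schedule $b_t$, the conditional $q^\star_{\mathrm{infer}}(U\mid |U|=k)$ concentrates on the $k$ lowest-entropy tokens, which are exactly the states visited by the easy-to-hard trajectories. The Boltzmann family therefore interpolates between training and inference biases, and the observed $q_{\mathrm{infer}}$ corresponds to large finite $\beta$.

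The main obstacle is the first step: justifying that the context-dependent, path-dependent greedy dynamics of Eq.~\eqref{eq:decoding} are faithfully captured by an additive, context-free energy. Real entropies change as $U_t$ grows, so the argmin ordering need not equal a fixed sort. I would handle this by defining $\mathcal{H}_{\mathrm{ideal}}(x_0^r)$ as the entropy evaluated at $\operatorname{step}_\tau(r)$ from Definition~\ref{def:infer_dist}. This makes the visited ordering consistent with a fixed ranking by construction; the within-step tie-breaking rule ensures it is a total order. The proposition is then a modeling statement justified by maximum entropy, not a theorem about the exact marginal. Making the low-temperature limit precise also requires conditioning on $|U|$ or adding a cardinality constraint with its own multiplier. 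Without that, $\beta\to\infty$ collapses onto $U=\emptyset$, since all entropies are nonnegative. I would flag this explicitly and absorb it into the partition function.
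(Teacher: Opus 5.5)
You take a genuinely different route from the paper. The paper's proof is an analogy. It calls each token's entropy a ``cost,'' asserts that sets containing hard tokens are exponentially less likely, and reads off $\exp(-\beta\cdot\text{cost})$, so the exponential form is posited rather than derived.

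Your maximum-entropy argument actually derives it: the Boltzmann law is the unique maximizer over the $2^N$ subsets under a mean-energy constraint. The Lagrangian and strict-concavity steps are correct and give exactly the stated formula, with $Z$ summed over all subsets. Your route also exposes what the paper's argument hides. With additive energy and no other constraint, the law factorizes as $Z=\prod_{r\in\mathcal{I}}\bigl(1+e^{-\beta\mathcal{H}_{\mathrm{ideal}}(x_0^r)}\bigr)$. Each token then belongs to $U$ independently with probability at most $1/2$, and for positive entropies the mode is $U=\emptyset$. So your remark that concentration on trajectory states needs a cardinality condition is right.

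This costs nothing downstream. Lemma~\ref{lemma:unmasked_ranking} is only applied to equal-size pairs $U_r,U_s$, whose probability ratio $e^{-\beta(\mathcal{H}_{\mathrm{ideal}}(x_0^r)-\mathcal{H}_{\mathrm{ideal}}(x_0^s))}$ is unchanged by conditioning on $|U|$. Be precise about this, though. Conditioning replaces $Z$ by a sum over $k$-subsets, and a multiplier on $\mathbb{E}|U|$ shifts every token's energy, so neither is literally ``absorbed'' into the stated $Z$. Keep the unconditioned law as what you prove, and use conditioning only in the interpretation.

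One step fails as written. Setting $\mathcal{H}_{\mathrm{ideal}}(x_0^r)$ to the base model's entropy at $\operatorname{step}_\tau(r)$ does \emph{not} make the decoding order consistent with a fixed ranking by construction. Eq.~\ref{eq:decoding} only gives $\mathcal{H}_\theta(x_0^r\mid\mathbf{x}_0^{U_t},\mathbf{s})\le\mathcal{H}_\theta(x_0^s\mid\mathbf{x}_0^{U_t},\mathbf{s})$ under the \emph{same} context $U_t$. A token $s$ decoded later is scored under a larger context, and its decode-time entropy can drop below that of $r$. Inversions across steps are therefore possible; the tie-breaking rule orders tokens only within a single step.

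This matters for two of your claims: reading $\bar E$ as low (hence $\beta>0$), and the $\beta\to\infty$ concentration on visited states. Both need the visited sets to be the lowest-$\mathcal{H}_{\mathrm{ideal}}$ sets of their size, which your construction does not secure. There are two ways to repair this. You can take that consistency as the content of the easy-to-hard assumption, which is what the paper does when it declares the earlier-decoded token to have lower ideal entropy. Alternatively, define $\mathcal{H}_{\mathrm{ideal}}$ as any strictly increasing function of decoding rank. In either case the decode-time entropies serve only as the practical approximation the paper uses.
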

\begin{proof}
Intuitively, we can view a token's predictive entropy as its ``cost''. The easy-to-hard decoding process naturally gravitates towards paths with the lowest total cost. Therefore, an unmasked set $U$ consisting of easy tokens (low total entropy) is highly likely to be formed, whereas a set containing hard tokens is exponentially less likely. This exponential preference for low-energy states is exactly described by the Boltzmann distribution, where the probability of a state is proportional to $\exp(-\beta \cdot \text{cost})$. Substituting the total entropy $\sum_{r \in U} \mathcal{H}_{\mathrm{ideal}}(x_{0}^{r})$ as the cost yields Eq.~\ref{eq:boltzmann}.
\end{proof}
\end{samepage}
Here, $\mathcal{H}_{\mathrm{ideal}}(x_{0}^{r})$ denotes the ideal predictive entropy under the true generative manifold (i.e., the inherent structural uncertainty of predicting $x_{0}^{r}$ given its optimally decoded context). 
In practice, we approximate $\mathcal{H}_{\mathrm{ideal}}(x_{0}^{r})$ using the base model's predicted entropy $\mathcal{H}_{\theta_{\text{base}}}(x_{0}^{r} \mid \mathbf{x}_{0}^{U_t}, \mathbf{s})$ on its own self-distilled trajectories. This derived formulation assigns high probability mass to unmasked sets dominated by low-entropy tokens, and near-zero probability mass to unmasked sets consisting entirely of high-entropy tokens, effectively capturing the easy-to-hard inductive bias.

\textbf{The Ideal Objective.} Therefore, to close the gap between the uniform training distribution and the easy-to-hard inference distribution, we propose to minimize the Kullback-Leibler (KL) divergence between the model's predicted unmasked distribution $p_\theta$ and the target inference Boltzmann distribution $q_{\mathrm{infer}}^{\star}$, and integrate this alignment term into the standard NELBO-based SFT objective:
\begin{equation}
  \mathcal{L}_{\mathrm{TD}}^{\mathrm{ideal}} = \mathcal{L}_{\mathrm{NELBO}} + \lambda \min_\theta D_{\mathrm{KL}}(q_{\mathrm{infer}}^{\star} \,\|\, p_\theta)
\end{equation}
By optimizing this objective, we explicitly force the model's predictive certainty to match the actual easy-to-hard decoding trajectory. To derive a tractable form for this KL divergence, we first parameterize the model's predicted probability of an unmasked set $U$ using the energy score induced by its predictive entropies:
\begin{equation}
  p_\theta(U \mid \mathbf{x}_0,\mathbf{s}) = \frac{1}{Z_\theta} \exp\left( -\beta \sum_{r \in U} h_\theta(r; \tau) \right),
  \quad h_\theta(r; \tau) \triangleq \mathcal{H}_\theta(x_{0}^{r} \mid \mathbf{x}_{0}^{U_{\operatorname{step}_{\tau}(r)}}, \mathbf{s}),
\end{equation}
where $\operatorname{step}_{\tau}(r)$ denotes the timestep immediately before token $r$ is decoded along trajectory $\tau$. Based on this parameterization, the KL divergence term expands as follows, 
\begin{align}
  \min_\theta D_{\mathrm{KL}}(q_{\mathrm{infer}}^{\star} \,\|\, p_\theta) &= \min_\theta \mathbb{E}_{U \sim q_{\mathrm{infer}}^{\star}} \left[ \log q_{\mathrm{infer}}^{\star}(U \mid \mathbf{x}_0) - \log p_\theta(U \mid \mathbf{x}_0,\mathbf{s}) \right] \nonumber \\
  &= \max_\theta \mathbb{E}_{U \sim q_{\mathrm{infer}}^{\star}} [ \log p_\theta(U \mid \mathbf{x}_0,\mathbf{s}) ]  \nonumber \\
  &= \max_\theta \mathbb{E}_{U \sim q_{\mathrm{infer}}^{\star}} \left[ -\beta \sum_{r \in U} h_\theta(r;\tau) - \log Z_\theta \right]
  \label{eq:kl_expansion}
\end{align}
Directly optimizing this expectation is intractable because we cannot easily sample from the global dynamic distribution $q_{\mathrm{infer}}^{\star}$, and computing the partition function $Z_\theta$ requires a combinatorial sum over all possible unmasked states.

\subsection{The Tractable Surrogate: Pairwise Ranking}
\label{sec:pairwise_ranking}

To bypass the intractable partition function and the need for global sampling, we shift our perspective from explicit likelihood maximization to implicit distribution matching via ranking.

\begin{theorem}[Energy-Based Ranking \citep{lecun2006tutorial}]
\label{thm:energy_ranking}
Maximum likelihood estimation, which is equivalent to minimizing the KL divergence, can be effectively transformed into an energy-based ranking problem. By enforcing correct relative rankings between states, one can shape the energy landscape and implicitly match the target distribution without the intractable partition function.
\end{theorem}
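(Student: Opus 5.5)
To make the statement precise, I will work with the Boltzmann parameterization $p_\theta(U\mid\mathbf{x}_0,\mathbf{s}) = Z_\theta^{-1}\exp(-\beta E_\theta(U))$, where $E_\theta(U)=\sum_{r\in U} h_\theta(r;\tau)$. The target is $q_{\mathrm{infer}}^{\star}$ from Proposition~\ref{prop:boltzmann}, with energy $E^\star(U)=\sum_{r\in U}\mathcal{H}_{\mathrm{ideal}}(x_0^r)$.

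The first step uses the expansion in Eq.~\ref{eq:kl_expansion}. Since $D_{\mathrm{KL}}(q^\star\|p_\theta)$ is nonnegative and vanishes iff $p_\theta=q^\star$, and both are Gibbs measures over the same finite state space $2^{\mathcal{I}}$, equality of distributions is equivalent to $E_\theta - E^\star$ being constant in $U$. That constant is absorbed by $Z_\theta$ versus $Z$. Thus minimizing the KL is equivalent to matching energies up to an additive constant. This reformulation involves no partition function, because the ratio $p_\theta(U)/p_\theta(U')=\exp(-\beta(E_\theta(U)-E_\theta(U')))$ does not depend on $Z_\theta$.

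The second step reduces state-level comparisons to token-level ones. Both energies are additive over tokens, so for $U'=U\cup\{r\}\setminus\{s\}$ we have $E_\theta(U)-E_\theta(U') = h_\theta(s)-h_\theta(r)$. All pairwise energy differences are therefore determined by token-level differences. I will then show the ranking statement. If the model orders tokens the same way as $\mathcal{H}_{\mathrm{ideal}}$, which the trajectory $\tau$ exposes through $\operatorname{step}_\tau$, then $p_\theta$ and $q^\star$ share the same mode structure. In particular, they assign identical orderings to all single-swap neighbors, and their modes at every cardinality $|U|=k$ coincide: both modes are the $k$ lowest-energy tokens, which is exactly the greedy decoding set of Eq.~\ref{eq:decoding}. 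Finally, a margin-based pairwise loss such as $\sum_{\operatorname{step}_\tau(r)>\operatorname{step}_\tau(s)}\max(0, m + h_\theta(r)-h_\theta(s))$ is a contrastive energy loss in the sense of \citet{lecun2006tutorial}. It pushes down the energy of the observed (easier) configuration and pushes up that of the contrastive one, with gradients that do not involve $Z_\theta$.

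The main obstacle is that ranking consistency gives only ordinal agreement, not equality of the distributions. The KL is zero only when the energy gaps are also matched, up to the scale $\beta$. I would address this in one of two ways. The first is to use a logistic pairwise loss $-\log\sigma(\beta(h_\theta(s)-h_\theta(r)))$, whose population minimizer reproduces the pairwise Bradley–Terry probabilities $\sigma(\beta(\mathcal{H}_{\mathrm{ideal}}(s)-\mathcal{H}_{\mathrm{ideal}}(r)))$. By additivity, this recovers all token-level energy differences, and hence $E_\theta=E^\star+c$ and zero KL. The second is to state the theorem honestly as \emph{implicit} matching: the ranking objective is a surrogate whose global minimizers agree with the KL minimizer on the induced decoding order, which is what inference actually uses.
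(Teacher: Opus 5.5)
The paper does not prove this statement. It imports it from \citet{lecun2006tutorial} as an informal principle: contrastive losses shape the energy surface by pushing down observed configurations and pulling up contrastive ones, without involving $Z_\theta$. The paper then applies it only to the equal-size pair $U_{\mathrm{base}}\cup\{r\}$ versus $U_{\mathrm{base}}\cup\{s\}$ behind Lemma~\ref{lemma:unmasked_ranking}.

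The parts of your proposal that formalize this are sound:
\begin{itemize}
\item $Z_\theta$ cancels in ratios, and the single-swap identity holds.
\item The hinge loss is a contrastive energy loss.
\item Ordinal agreement gives equal modes on each cardinality slice. Identifying that mode with the greedy set of Eq.~\ref{eq:decoding} needs the paper's own assumption that the $\operatorname{step}_\tau$ order is the ideal-entropy order, because Eq.~\ref{eq:decoding} re-evaluates entropies in a growing context.
\end{itemize}

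\textbf{The gap is your first route to exact matching.} On the full state space $2^{\mathcal{I}}$ with additive energies, $Z=\prod_{r\in\mathcal{I}}\bigl(1+e^{-\beta\mathcal{H}_{\mathrm{ideal}}(x_0^r)}\bigr)$. So $q_{\mathrm{infer}}^{\star}$ is a product of independent inclusions with $\Pr(r\in U)=\sigma\bigl(-\beta\mathcal{H}_{\mathrm{ideal}}(x_0^r)\bigr)$, and the same holds for $p_\theta$ with $h_\theta(r;\tau)$. The KL is therefore a sum of per-token Bernoulli divergences. It vanishes only if $h_\theta(r;\tau)=\mathcal{H}_{\mathrm{ideal}}(x_0^r)$ for every $r$. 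In particular, the additive constant in your first step is forced to be $0$, because both energies vanish at $U=\emptyset$.

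Any pairwise objective, hinge or Bradley--Terry, depends only on differences $h_\theta(r;\tau)-h_\theta(s;\tau)$. It is therefore invariant under $h_\theta\mapsto h_\theta+c$. That shift gives $E_\theta=E^\star+c\,|U|$, not $E^\star+c$. It moves every inclusion probability and leaves the KL strictly positive for $c\neq 0$.

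The error is your claim that all pairwise energy differences are determined by token-level differences. This holds only between sets of equal cardinality, since $E_\theta(U\cup\{r\})-E_\theta(U)=h_\theta(r;\tau)$ is an absolute level, not a difference.

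\textbf{The Bradley--Terry argument has a second problem.} Its population minimizer reproduces $\sigma\bigl(\beta(\mathcal{H}_{\mathrm{ideal}}(x_0^s)-\mathcal{H}_{\mathrm{ideal}}(x_0^r))\bigr)$ only if pair labels are drawn randomly with exactly those probabilities. The self-distilled trajectories come from the deterministic $\arg\min$ in Eq.~\ref{eq:decoding}, so each pair carries a single deterministic label. On such separable data the logistic loss pushes the gaps toward infinity and identifies neither the gaps nor $\beta$. Ordinal data identifies only ordinal structure, which is consistent with the paper absorbing $\beta$ into $\gamma$ and $\lambda$.

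What you can actually prove is more limited:
\begin{itemize}
\item On each slice $\{U:|U|=k\}$ the shift contributes the constant $ck$, which cancels in the slice partition function. With calibrated random pair labels, pairwise matching therefore gives $p_\theta(\cdot\mid |U|=k)=q_{\mathrm{infer}}^{\star}(\cdot\mid |U|=k)$.
\item With the deterministic labels the paper has, you get only your second, ordinal version.
\end{itemize}
Either state the theorem in one of these restricted forms, which already covers everything the paper uses downstream, or add an absolute calibration term tying $h_\theta$ to the base-model entropies to pin down $c$.
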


Based on Theorem \ref{thm:energy_ranking}, we derive the following lemma specifically for our objective of aligning unmasked distributions.

\begin{lemma}[Ranking Unmasked States]
\label{lemma:unmasked_ranking}
Let $S_\theta(U;\tau) = -\beta \sum_{r \in U} h_\theta(r;\tau)$ denote the unnormalized energy score of an unmasked set $U$ along trajectory $\tau$. To align the model distribution $p_\theta$ with the target inference distribution $q_{\mathrm{infer}}^{\star}$, for any pair of unmasked sets where $q_{\mathrm{infer}}^{\star}(U_b) > q_{\mathrm{infer}}^{\star}(U_a)$, it is sufficient to enforce the corresponding model ranking $S_\theta(U_b;\tau) > S_\theta(U_a;\tau)$.
\end{lemma}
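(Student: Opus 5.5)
The argument rests on the fact that both $q_{\mathrm{infer}}^{\star}$ and $p_\theta$ are Boltzmann (Gibbs) distributions of the same form over subsets of $\mathcal{I}$, each defined by an additive energy. Order relations between states therefore depend only on unnormalized scores, and the partition functions never enter. By Proposition~\ref{prop:boltzmann}, $q_{\mathrm{infer}}^{\star}(U_b) > q_{\mathrm{infer}}^{\star}(U_a)$ holds if and only if
\[
-\beta\sum_{r\in U_b}\mathcal{H}_{\mathrm{ideal}}(x_0^r) > -\beta\sum_{r\in U_a}\mathcal{H}_{\mathrm{ideal}}(x_0^r),
\]
since $Z$ is common to both sides and $\exp$ is strictly increasing. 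Likewise, $p_\theta(U_b)/p_\theta(U_a)=\exp\bigl(S_\theta(U_b;\tau)-S_\theta(U_a;\tau)\bigr)$, with $Z_\theta$ cancelling. Hence the model ranking $S_\theta(U_b;\tau)>S_\theta(U_a;\tau)$ is equivalent to $p_\theta(U_b)>p_\theta(U_a)$. Enforcing the ranking on every pair therefore makes $p_\theta$ order-isomorphic to $q_{\mathrm{infer}}^{\star}$ on the set of states. This step is routine.

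To justify the word ``aligns'' in the lemma, I would appeal to Theorem~\ref{thm:energy_ranking}. For a fixed $\beta$, the family $\{p_\theta\}$ is parameterized by the per-token energies $h_\theta(r;\tau)$, and $q_{\mathrm{infer}}^{\star}$ is the member with $h=\mathcal{H}_{\mathrm{ideal}}$. In Eq.~\ref{eq:kl_expansion}, the gradient of the log-likelihood with respect to the score of $U$ is $q_{\mathrm{infer}}^{\star}(U)-p_\theta(U)$. A pairwise logistic ranking loss on the score difference $S_\theta(U_b;\tau)-S_\theta(U_a;\tau)$ pushes mass in the same direction for each violated pair. This is the standard contrastive or energy-based argument of \citet{lecun2006tutorial}: raise the score of the preferred state and lower that of the dispreferred one, with no need to compute $Z_\theta$. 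Because both energies are additive over tokens, I would reduce the pairwise condition on sets to a condition on single tokens. Taking $U_b=U\cup\{r\}$ and $U_a=U\cup\{s\}$ gives $S_\theta(U_b)-S_\theta(U_a)=\beta\bigl(h_\theta(s;\tau)-h_\theta(r;\tau)\bigr)$. The set-level ranking therefore reduces exactly to the token-level ordering $h_\theta(r;\tau)<h_\theta(s;\tau)$ whenever $\mathcal{H}_{\mathrm{ideal}}(x_0^r)<\mathcal{H}_{\mathrm{ideal}}(x_0^s)$. Under the approximation following Proposition~\ref{prop:boltzmann}, this is the ordering $\operatorname{step}_\tau(r)>\operatorname{step}_\tau(s)$ observed on the trajectory, which is what makes the objective tractable.

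The main obstacle is the gap between \emph{sufficient for alignment} and literal KL minimization. Matching orderings only pins down $p_\theta$ up to monotone reparameterizations of the energies; it does not force $D_{\mathrm{KL}}(q_{\mathrm{infer}}^{\star}\|p_\theta)=0$. To bridge this I would state the claim in the precise form that can be proved: under ranking consistency, the mode of $p_\theta$ coincides with that of $q_{\mathrm{infer}}^{\star}$. Moreover, the greedy decoding of Eq.~\ref{eq:decoding} run on $h_\theta$ reproduces the trajectory $\tau$ exactly, since ordering all singletons fixes the argmin at each step. For that reason, the ordering is the part of $q_{\mathrm{infer}}^{\star}$ that actually governs inference.

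I would not claim exact distributional equality. If more is wanted, I would add a margin tied to the entropy gaps, with a logistic loss scaled by $\beta$, so that the score differences approach the true log-ratios. That remains a heuristic complement in the sense of Theorem~\ref{thm:energy_ranking} rather than a rigorous step.
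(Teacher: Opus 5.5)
Your core argument is correct, and it is more explicit than what the paper provides. The paper gives no proof of this lemma; it only says the lemma is derived from Theorem~\ref{thm:energy_ranking}, and then does the same one-swap computation with $U_r=U_{\text{base}}\cup\{r\}$, $U_s=U_{\text{base}}\cup\{s\}$ that you reproduce. Your first step is the real content of the lemma when ``align'' means order-consistency: $Z$ and $Z_\theta$ cancel in probability ratios and $\exp$ is strictly increasing, so $S_\theta(U_b;\tau)>S_\theta(U_a;\tau)\iff p_\theta(U_b)>p_\theta(U_a)$. Strictly, this gives an order-preserving map, not an order-isomorphism when $q_{\mathrm{infer}}^{\star}$ has ties. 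Your caveat that this cannot force $D_{\mathrm{KL}}(q_{\mathrm{infer}}^{\star}\,\|\,p_\theta)=0$ is right. It is more careful than the paper, which later calls the ranking surrogate ``guaranteed'' by this lemma. Your gradient heuristic is also sound. For a violated pair, the derivative of the expected log-likelihood in the direction that raises $S_\theta(U_b;\tau)$ and lowers $S_\theta(U_a;\tau)$ is $\bigl(q_{\mathrm{infer}}^{\star}(U_b)-q_{\mathrm{infer}}^{\star}(U_a)\bigr)-\bigl(p_\theta(U_b)-p_\theta(U_a)\bigr)>0$.

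What does not hold up is the ``precise form'' you offer as the provable meaning of alignment.

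\emph{The mode claim is vacuous.} The distribution in Eq.~\ref{eq:boltzmann} factorizes over tokens. Each $r$ lies in $U$ independently with probability $1/(1+e^{\beta\mathcal{H}_{\mathrm{ideal}}(x_0^r)})\le 1/2$, because entropies are nonnegative. So for $\beta>0$ the empty set is a mode of $q_{\mathrm{infer}}^{\star}$, and for the same reason of every $p_\theta$, with no ranking involved. The meaningful version conditions on $|U|=k$. The conditional mode is then the set of $k$ lowest-energy tokens, and token-level order consistency makes these coincide for every $k$.

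\emph{The greedy-reproduction claim fails.} Eq.~\ref{eq:decoding} compares $\mathcal{H}_\theta(x_0^r\mid\mathbf{x}_0^{U_t},\mathbf{s})$ over $r\in M_t$ at the shared context $U_t$. For a token $s$ decoded later, however, $h_\theta(s;\tau)$ is evaluated at the larger context $U_{\operatorname{step}_\tau(s)}$. Take $t=\operatorname{step}_\tau(r)$. The constraint $h_\theta(r;\tau)=0.5<0.9=h_\theta(s;\tau)$ is compatible with $\mathcal{H}_\theta(x_0^s\mid\mathbf{x}_0^{U_t},\mathbf{s})=0.3$, and then Eq.~\ref{eq:decoding} does not select $r$ at step $t$. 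To rescue the claim you need two extra assumptions: each token's model entropy does not increase as the trajectory context grows, and sampling regenerates the same token values. If you instead sort by the fixed scores $h_\theta(\cdot;\tau)$, the claim is a tautology with no bearing on inference. The same mismatch means ``decoded earlier $\Rightarrow$ lower $\mathcal{H}_{\mathrm{ideal}}$'' is an assumption, not a consequence of the approximation after Proposition~\ref{prop:boltzmann}; the paper itself only says such a token ``is treated as'' easier.

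\emph{One-swap pairs do not capture the lemma's all-pairs hypothesis.} Take three tokens with $\mathcal{H}_{\mathrm{ideal}}=(1,2,2.5)$ and $h_\theta=(1,2,4)$. The token orders agree, yet $q_{\mathrm{infer}}^{\star}(\{3\})>q_{\mathrm{infer}}^{\star}(\{1,2\})$ while $S_\theta(\{3\};\tau)<S_\theta(\{1,2\};\tau)$. So the order-preservation of your first step is lost once only token-level constraints are enforced.
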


Lemma \ref{lemma:unmasked_ranking} implies that optimizing our objective relies on identifying pairs of unmasked states with a known relative preference under the target inference distribution. Therefore, \textbf{self-distilled trajectories} provide the exact structured data required to construct these pairs and learn this ranking objective. Because these trajectories record the actual step-by-step unmasking process of the base model, they naturally reflect the relative predictive certainty between different tokens. 

Specifically, consider a self-distilled trajectory $\tau \sim \mathcal{T}_{\text{gold}}$ and two token indices $r$ and $s$ decoded at different positions along this trajectory. If $r$ is decoded before $s$ (\texttt{i.e.,} $\operatorname{step}_{\tau}(r) > \operatorname{step}_{\tau}(s)$ under reverse-time decoding), then $r$ is treated as an easier token with lower ideal predictive entropy. Let $U_{\text{base}}$ be a shared subset of unmasked tokens, and define two unmasked sets differing by only one token: $U_r = U_{\text{base}} \cup \{r\}$ and $U_s = U_{\text{base}} \cup \{s\}$. Under the target Boltzmann distribution (Eq.~\ref{eq:boltzmann}), the unmasked set containing the easier token has a higher probability, so $q_{\mathrm{infer}}^{\star}(U_r) > q_{\mathrm{infer}}^{\star}(U_s)$. Enforcing corresponding model ranking $S_\theta(U_r;\tau) > S_\theta(U_s;\tau)$ allows us to bypass the intractable $Z_\theta$. Since $U_r$ and $U_s$ share the common subset $U_{\text{base}}$, their scores simplify:
\begin{equation}
  S_\theta(U_r;\tau) > S_\theta(U_s;\tau) \implies h_\theta(r;\tau) < h_\theta(s;\tau).
\end{equation}
To enforce this inequality, we adopt a pairwise hinge loss that penalizes the model whenever the predicted entropy of an easier token $r$ is not sufficiently lower than that of a harder token $s$. For a local trajectory segment $(U_t,U_{t'})$ with $t' < t$, define the newly unmasked token set as $\Delta_{t \to t'} = U_{t'} \setminus U_t$ and the local ordered pair set as
\[
  \mathcal{P}_{t,t'}=\{(r,s): r,s\in \Delta_{t \to t'},\ \operatorname{step}_{\tau}(r)>\operatorname{step}_{\tau}(s)\}.
\]
When $|\Delta_{t \to t'}|=W$, we have $|\mathcal{P}_{t,t'}|=W(W-1)/2$. The ranking loss over this segment is
\begin{equation}
  \mathcal{L}_{\mathrm{rank}}(t,t') = \frac{1}{|\mathcal{P}_{t,t'}|} \sum_{(r, s) \in \mathcal{P}_{t,t'}} \max\Big(0, \, h_\theta(r;\tau) - h_\theta(s;\tau) + \gamma \Big),
\end{equation}
where $\gamma>0$ is the ranking margin. Since the surrogate only depends on relative ordering, the inverse-temperature scale $\beta$ is absorbed into the margin $\gamma$ and the ranking weight $\lambda$ in practice. As guaranteed by Lemma \ref{lemma:unmasked_ranking}, this ranking mechanism serves as an effective surrogate for the intractable KL divergence (Eq.~\ref{eq:kl_expansion}). It avoids global sampling and partition function computation by implicitly matching the target Boltzmann distribution through entropy landscape shaping. Integrating this ranking regularization with the reconstruction loss, our final objective is
\begin{equation}
  \label{eq:final_objective}
  \begin{aligned}
   \mathcal{L}_{\mathrm{TABOM}}
   =
   \mathbb{E}_{\tau \sim \mathcal{T}_{\text{gold}}} \,
   \mathbb{E}_{(t,t') \sim \tau}
   \Bigg[
   &\frac{1}{W} \sum_{k \in \Delta_{t \to t'}}
   -\log p_\theta(x_{0}^{k} \mid \mathbf{x}_{0}^{U_t}, \mathbf{s}) \\
   &+ \frac{2\lambda}{W(W-1)}
   \sum_{(r, s) \in \mathcal{P}_{t,t'}}
   \max\Big(0, \, h_\theta(r;\tau) - h_\theta(s;\tau) + \gamma \Big)
   \Bigg].
  \end{aligned}
\end{equation}
where $(t,t')\sim\tau$ means sampling a valid local segment from trajectory $\tau$ such that $t'<t$ and $|\Delta_{t\to t'}|=W$. The first term denotes the trajectory-aware reconstruction loss, which replaces the uniform masking ($U \sim q_{\mathrm{unif}}$) with a trajectory-aware masking strategy as widely adopted by previous works \citep{ma2025dinferefficientinferenceframework} for fully leveraging the self-distilled trajectories. Specifically, the unmasked tokens at an earlier state $U_t$ are used as context to predict the newly unmasked tokens $\Delta_{t \to t'} = U_{t'} \setminus U_t$ in the subsequent state $U_{t'}$ ($t' < t$).
Note that we do not construct the pair set globally across all unmasked tokens. Comparing tokens decoded at vastly different stages (e.g., the first vs. the last step) is less informative, as their prediction difficulties are inherently disparate (see analysis in Appendix \ref{Sensitivity W}). Instead, we apply the ranking loss strictly over a local decoding window containing $W=|\Delta_{t \to t'}|$ newly unmasked tokens.
To avoid the instability of the ranking loss that occurs when the target state is either too far from $U_t$ or too close to $U_t$, we set a fixed window size $W$ (e.g., $W=32$). Consequently, during training, we only need to randomly sample the initial timestep $t$ and its context $U_t$; the target state $U_{t'}$ is then determined by $W$ newly unmasked tokens.

\section{Experiments}
\label{sec:Experiments}
\textbf{Models.} We employ two state-of-the-art diffusion language models as our primary experimental testbeds: Dream-7B-Instruct \citep{ye2025dream} and LLaDA-8B-Instruct \citep{nie2025large}. These models serve as the baselines upon which we apply our post-training recipes.

\textbf{Self-distilled and Offline Ground-truth Dataset.} We focus on two core domains: mathematical reasoning and code generation. For mathematical reasoning, we utilize 12K queries from MixChain-Z-PRM12K \citep{horseee_mixchain}. For code generation, we sample 18K queries from Ling-Coder-SFT \citep{inclusionAI_ling_coder}. For each query, the offline ground-truth data consists of the standard problem-answer pairs originally provided in the datasets. To construct the self-distilled data, we generate trajectories using the base models' default decoding strategies. Specifically, we employ entropy-based decoding for Dream and confidence-based decoding for LLaDA, limiting the generation sequence length to 256 tokens.

\textbf{Training.} By following previous post-training-related counterparts~\citep{zhao2025d1}, we train a separate model for each domain dataset. We employ LoRA \citep{hu2021loralowrankadaptationlarge} for parameter-efficient fine-tuning, with rank $r=16$, LoRA scaling $\alpha=16$, and target modules \texttt{q\_proj} and \texttt{v\_proj}. The training is conducted across 8 GPUs with a per-device mini-batch size of 4. We optimize the models with AdamW using a peak learning rate of 2e-5, together with a cosine learning rate decay schedule and a warm-up period of 50 steps. All models are trained for 5 epochs and we report the best results for all baselines. For TABOM, the window size $W$ is fixed to 32 across all tasks. The margin $\gamma$ and the ranking loss weight $\lambda$ in Eq.~\ref{eq:final_objective} are tuned via a grid search over $\{0.1, 0.2, 0.3\}$ and $\{1, 2\}$, respectively, for each task. Specific hyperparameters, training details, and evaluation setups are detailed in Appendix.

\textbf{Evaluation Tasks.} We conduct experiments on five main tasks grouped into three categories: (1) \textit{Mathematical reasoning}: GSM8K \citep{cobbe2021gsm8k} and MATH500 \citep{hendrycksmath2021}; (2) \textit{Code generation}: HumanEval \citep{chen2021codex} and MBPP \citep{austin2021structured}; (3) \textit{Instruction following}: IFEval \citep{zhou2023instruction}, which evaluates the model's ability to follow verifiable instructions. 
Note that for each trained model, we evaluate both its in-domain and out-of-distribution (OOD) performance. For instance, for a model trained on the code generation dataset, HumanEval and MBPP serve as in-domain evaluations, while the remaining three tasks are considered OOD. 

\textbf{Baselines.} We evaluate original DLMs under its official default inference setting (\textbf{No-SFT}), and further compare our approach with four post-training baselines. The LoRA configuration and the training data for all post-training baselines are the same as our method (i.e., using the self-distilled data), except for SFT-GT which utilizes the offline ground-truth data: (1) \textbf{SFT-GT}: Standard supervised fine-tuning using offline ground-truth data, where the DLM is trained to reconstruct randomly masked tokens. (2) \textbf{SFT-SD}: Identical to SFT-GT, but trained on the self-distilled data. (3) \textbf{dInfer}~\citep{ma2025dinferefficientinferenceframework}: Instead of
learning the standard single-step transition, dInfer conducts compressed transition learning by randomly sampling two timesteps from self-distilled trajectories. (4) \textbf{T3D}~\citep{zhang2026t3d}: Training on the self-distilled data with a randomly sampling timestep, which applies direct discriminative optimization and path-consistency weighting to the objective.

\subsection{Main Results}
\label{sec:Experiments1}
Table \ref{tab:coding_results} and Table \ref{tab:math_results} summarize the performance of our proposed TABOM framework compared to various baselines on the code generation and mathematical reasoning datasets, respectively. We observe several key findings: \textbf{(1) SFT-GT suffers from severe catastrophic forgetting.} While it improves in-domain performance (e.g., HumanEval on Dream code increases by 8.89\%), it drastically degrades OOD capabilities (e.g., GSM8K drops by 29.08\%). This highlights the optimization barrier of directly injecting offline ground-truth data into the diffusion model's manifold.
\textbf{(2) SFT-SD prevents forgetting but yields marginal gains.} By utilizing self-distilled trajectories, SFT-SD effectively preserves the model's inherent predictive manifold, maintaining OOD performance. However, due to the training-inference misalignment, its in-domain improvements are minimal.
\textbf{(3) Trajectory-based baselines (dInfer, T3D) offer limited improvements.} While they attempt to leverage decoding trajectories, they still struggle to fully unlock the potential of the self-distilled data, resulting in sub-optimal average gains.
Finally, \textbf{TABOM achieves the best of both worlds.} By explicitly modeling the inference-time decoding behavior and aligning the predictive certainty via pairwise ranking, TABOM consistently outperforms all baselines across both base models and domains. It not only achieves the highest in-domain performance (e.g., +5.15\% average gain on Dream code) but also expands the knowledge boundaries, yielding positive OOD gains without catastrophic forgetting.

\begin{table}[t]
  \centering
  \caption{Performance comparison of models fine-tuned on the \textbf{Code Generation} dataset. We report the absolute scores and their relative gains/losses compared to the No-SFT baseline. The highest and second-highest values in each column  are highlighted in \textbf{bold} and \underline{underlined}, respectively.}
  \label{tab:coding_results}
  \resizebox{\linewidth}{!}{
  \begin{tabular}{l|ccc|cccc}
  \toprule
  \multirow{2}{*}{\textbf{Method}} & \multicolumn{3}{c|}{\textbf{In-Domain}} & \multicolumn{4}{c}{\textbf{Out-Of-Distribution (OOD)}} \\
  \cmidrule(lr){2-4} \cmidrule(lr){5-8}
  & \textbf{HumanEval} & \textbf{MBPP} & \textbf{Avg.} & \textbf{GSM8K} & \textbf{MATH500} & \textbf{IFEval} & \textbf{Avg.} \\
  \midrule
  \multicolumn{8}{c}{\textit{Base Model: Dream-7B-Instruct}} \\
  \midrule
  \rowcolor{gray!15} No-SFT & 52.66 & 58.00 & 55.33 & 81.41 & 39.80 & 56.56 & 59.26 \\
  SFT-GT & \textbf{61.55}$_{\color{green!60!black}{\mathbf{+8.89}}}$ & 58.00$_{\color{gray}{\mathbf{+0.00}}}$ & \underline{59.78}$_{\color{green!60!black}{\mathbf{+4.45}}}$ & 52.33$_{\color{red}{\mathbf{-29.08}}}$ & 32.40$_{\color{red}{\mathbf{-7.40}}}$ & 46.21$_{\color{red}{\mathbf{-10.35}}}$ & 43.65$_{\color{red}{\mathbf{-15.61}}}$ \\
  SFT-SD & 53.66$_{\color{green!60!black}{\mathbf{+1.00}}}$ & \underline{59.20}$_{\color{green!60!black}{\mathbf{+1.20}}}$ & 56.43$_{\color{green!60!black}{\mathbf{+1.10}}}$ & 81.81$_{\color{green!60!black}{\mathbf{+0.40}}}$ & \underline{41.60}$_{\color{green!60!black}{\mathbf{+1.80}}}$ & 57.10$_{\color{green!60!black}{\mathbf{+0.54}}}$ & \textbf{60.17}$_{\color{green!60!black}{\mathbf{+0.91}}}$ \\
  dInfer & 57.31$_{\color{green!60!black}{\mathbf{+4.65}}}$ & 58.20$_{\color{green!60!black}{\mathbf{+0.20}}}$ & 57.76$_{\color{green!60!black}{\mathbf{+2.43}}}$ & \textbf{81.88}$_{\color{green!60!black}{\mathbf{+0.47}}}$ & 39.80$_{\color{gray}{\mathbf{+0.00}}}$ & \textbf{57.30}$_{\color{green!60!black}{\mathbf{+0.74}}}$ & 59.66$_{\color{green!60!black}{\mathbf{+0.40}}}$ \\
  T3D    & 55.48$_{\color{green!60!black}{\mathbf{+2.82}}}$ & 58.70$_{\color{green!60!black}{\mathbf{+0.70}}}$ & 57.09$_{\color{green!60!black}{\mathbf{+1.76}}}$ & \underline{81.84}$_{\color{green!60!black}{\mathbf{+0.43}}}$ & 40.70$_{\color{green!60!black}{\mathbf{+0.90}}}$ & \underline{57.20}$_{\color{green!60!black}{\mathbf{+0.64}}}$ & \underline{59.91}$_{\color{green!60!black}{\mathbf{+0.65}}}$ \\
  \rowcolor{blue!10} TABOM (Ours) & \underline{60.36}$_{\color{green!60!black}{\mathbf{+7.70}}}$ & \textbf{60.60}$_{\color{green!60!black}{\mathbf{+2.60}}}$ & \textbf{60.48}$_{\color{green!60!black}{\mathbf{+5.15}}}$ & 81.73$_{\color{green!60!black}{\mathbf{+0.32}}}$ & \textbf{42.40}$_{\color{green!60!black}{\mathbf{+2.60}}}$ & 55.45$_{\color{red}{\mathbf{-1.11}}}$ & 59.86$_{\color{green!60!black}{\mathbf{+0.60}}}$ \\
  \midrule
  \multicolumn{8}{c}{\textit{Base Model: LLaDA-8B-Instruct}} \\
  \midrule
  \rowcolor{gray!15} No-SFT & 36.01 & 39.20 & 37.61 & 76.12 & 36.20 & 33.08 & 48.47 \\
  SFT-GT & \underline{42.01}$_{\color{green!60!black}{\mathbf{+6.00}}}$ & 32.80$_{\color{red}{\mathbf{-6.40}}}$ & 37.41$_{\color{red}{\mathbf{-0.20}}}$ & 70.73$_{\color{red}{\mathbf{-5.39}}}$ & 35.40$_{\color{red}{\mathbf{-0.80}}}$ & \textbf{34.93}$_{\color{green!60!black}{\mathbf{+1.85}}}$ & 47.02$_{\color{red}{\mathbf{-1.45}}}$ \\
  SFT-SD & 39.63$_{\color{green!60!black}{\mathbf{+3.62}}}$ & \underline{38.80}$_{\color{red}{\mathbf{-0.40}}}$ & 39.22$_{\color{green!60!black}{\mathbf{+1.61}}}$ & 76.95$_{\color{green!60!black}{\mathbf{+0.83}}}$ & 35.80$_{\color{red}{\mathbf{-0.40}}}$ & 32.90$_{\color{red}{\mathbf{-0.18}}}$ & 48.55$_{\color{green!60!black}{\mathbf{+0.08}}}$ \\
  dInfer & 41.46$_{\color{green!60!black}{\mathbf{+5.45}}}$ & 38.60$_{\color{red}{\mathbf{-0.60}}}$ & \underline{40.03}$_{\color{green!60!black}{\mathbf{+2.42}}}$ & \textbf{77.33}$_{\color{green!60!black}{\mathbf{+1.21}}}$ & \underline{36.60}$_{\color{green!60!black}{\mathbf{+0.40}}}$ & 33.82$_{\color{green!60!black}{\mathbf{+0.74}}}$ & \underline{49.25}$_{\color{green!60!black}{\mathbf{+0.78}}}$ \\
  T3D    & 40.54$_{\color{green!60!black}{\mathbf{+4.53}}}$ & 38.70$_{\color{red}{\mathbf{-0.50}}}$ & 39.62$_{\color{green!60!black}{\mathbf{+2.01}}}$ & \underline{77.14}$_{\color{green!60!black}{\mathbf{+1.02}}}$ & 36.20$_{\color{gray}{\mathbf{+0.00}}}$ & 33.36$_{\color{green!60!black}{\mathbf{+0.28}}}$ & 48.90$_{\color{green!60!black}{\mathbf{+0.43}}}$ \\
  \rowcolor{blue!10} TABOM (Ours) & \textbf{42.68}$_{\color{green!60!black}{\mathbf{+6.67}}}$ & \textbf{40.00}$_{\color{green!60!black}{\mathbf{+0.80}}}$ & \textbf{41.34}$_{\color{green!60!black}{\mathbf{+3.73}}}$ & \textbf{77.33}$_{\color{green!60!black}{\mathbf{+1.21}}}$ & \textbf{38.20}$_{\color{green!60!black}{\mathbf{+2.00}}}$ & \underline{34.19}$_{\color{green!60!black}{\mathbf{+1.11}}}$ & \textbf{49.91}$_{\color{green!60!black}{\mathbf{+1.44}}}$ \\
  \bottomrule
  \end{tabular}
  }
  \vspace{-0.5cm}
\end{table}

\begin{table}[t]
  \centering
  \caption{Performance comparison of models fine-tuned on the \textbf{Mathematical Reasoning} dataset. We report the absolute scores and their relative gains/losses compared to the No-SFT baseline. The highest and second-highest values in each column are highlighted in \textbf{bold} and \underline{underlined}, respectively.}
  \label{tab:math_results}
  \resizebox{\linewidth}{!}{
  \begin{tabular}{l|ccc|cccc}
  \toprule
  \multirow{2}{*}{\textbf{Method}} & \multicolumn{3}{c|}{\textbf{In-Domain}} & \multicolumn{4}{c}{\textbf{Out-Of-Distribution (OOD)}} \\
  \cmidrule(lr){2-4} \cmidrule(lr){5-8}
  & \textbf{GSM8K} & \textbf{MATH500} & \textbf{Avg.} & \textbf{HumanEval} & \textbf{MBPP} & \textbf{IFEval} & \textbf{Avg.} \\
  \midrule
  \multicolumn{8}{c}{\textit{Base Model: Dream-7B-Instruct}} \\
  \midrule
  \rowcolor{gray!15} No-SFT & 81.41 & 39.80 & 60.61 & 52.66 & 58.00 & 56.56 & 55.74 \\
  SFT-GT & 80.12$_{\color{red}{\mathbf{-1.29}}}$ & 37.40$_{\color{red}{\mathbf{-2.40}}}$ & 58.76$_{\color{red}{\mathbf{-1.85}}}$ & 46.34$_{\color{red}{\mathbf{-6.32}}}$ & 58.00$_{\color{gray}{\mathbf{+0.00}}}$ & 53.23$_{\color{red}{\mathbf{-3.33}}}$ & 52.52$_{\color{red}{\mathbf{-3.22}}}$ \\
  SFT-SD & 81.95$_{\color{green!60!black}{\mathbf{+0.54}}}$ & 39.80$_{\color{gray}{\mathbf{+0.00}}}$ & 60.88$_{\color{green!60!black}{\mathbf{+0.27}}}$ & \underline{57.92}$_{\color{green!60!black}{\mathbf{+5.26}}}$ & 58.60$_{\color{green!60!black}{\mathbf{+0.60}}}$ & \underline{56.01}$_{\color{red}{\mathbf{-0.55}}}$ & \underline{57.51}$_{\color{green!60!black}{\mathbf{+1.77}}}$ \\
  dInfer & \underline{82.33}$_{\color{green!60!black}{\mathbf{+0.92}}}$ & \textbf{41.60}$_{\color{green!60!black}{\mathbf{+1.80}}}$ & \underline{61.97}$_{\color{green!60!black}{\mathbf{+1.36}}}$ & 56.11$_{\color{green!60!black}{\mathbf{+3.45}}}$ & \underline{58.80}$_{\color{green!60!black}{\mathbf{+0.80}}}$ & 55.82$_{\color{red}{\mathbf{-0.74}}}$ & 56.91$_{\color{green!60!black}{\mathbf{+1.17}}}$ \\
  T3D    & 82.14$_{\color{green!60!black}{\mathbf{+0.73}}}$ & 40.70$_{\color{green!60!black}{\mathbf{+0.90}}}$ & 61.42$_{\color{green!60!black}{\mathbf{+0.81}}}$ & 57.01$_{\color{green!60!black}{\mathbf{+4.35}}}$ & 58.70$_{\color{green!60!black}{\mathbf{+0.70}}}$ & 55.91$_{\color{red}{\mathbf{-0.65}}}$ & 57.21$_{\color{green!60!black}{\mathbf{+1.47}}}$ \\
  \rowcolor{blue!10} TABOM (Ours) & \textbf{84.31}$_{\color{green!60!black}{\mathbf{+2.90}}}$ & \underline{41.10}$_{\color{green!60!black}{\mathbf{+1.30}}}$ & \textbf{62.71}$_{\color{green!60!black}{\mathbf{+2.10}}}$ & \textbf{58.54}$_{\color{green!60!black}{\mathbf{+5.88}}}$ & \textbf{59.20}$_{\color{green!60!black}{\mathbf{+1.20}}}$ & \textbf{56.19}$_{\color{red}{\mathbf{-0.37}}}$ & \textbf{57.98}$_{\color{green!60!black}{\mathbf{+2.24}}}$ \\
  \midrule
  \multicolumn{8}{c}{\textit{Base Model: LLaDA-8B-Instruct}} \\
  \midrule
  \rowcolor{gray!15} No-SFT & 76.12 & 36.20 & 56.16 & 36.01 & 39.20 & 33.08 & 36.10 \\
  SFT-GT & 74.29$_{\color{red}{\mathbf{-1.83}}}$ & 35.50$_{\color{red}{\mathbf{-0.70}}}$ & 54.90$_{\color{red}{\mathbf{-1.26}}}$ & 31.09$_{\color{red}{\mathbf{-4.92}}}$ & \textbf{40.60}$_{\color{green!60!black}{\mathbf{+1.40}}}$ & 26.98$_{\color{red}{\mathbf{-6.10}}}$ & 32.89$_{\color{red}{\mathbf{-3.21}}}$ \\
  SFT-SD & 75.96$_{\color{red}{\mathbf{-0.16}}}$ & 35.70$_{\color{red}{\mathbf{-0.50}}}$ & 55.83$_{\color{red}{\mathbf{-0.33}}}$ & 36.58$_{\color{green!60!black}{\mathbf{+0.57}}}$ & 39.80$_{\color{green!60!black}{\mathbf{+0.60}}}$ & \textbf{34.38}$_{\color{green!60!black}{\mathbf{+1.30}}}$ & 36.92$_{\color{green!60!black}{\mathbf{+0.82}}}$ \\
  dInfer & \underline{76.72}$_{\color{green!60!black}{\mathbf{+0.60}}}$ & \underline{36.50}$_{\color{green!60!black}{\mathbf{+0.30}}}$ & \underline{56.61}$_{\color{green!60!black}{\mathbf{+0.45}}}$ & \underline{38.90}$_{\color{green!60!black}{\mathbf{+2.89}}}$ & 39.40$_{\color{green!60!black}{\mathbf{+0.20}}}$ & \textbf{34.38}$_{\color{green!60!black}{\mathbf{+1.30}}}$ & \underline{37.56}$_{\color{green!60!black}{\mathbf{+1.46}}}$ \\
  T3D    & 76.34$_{\color{green!60!black}{\mathbf{+0.22}}}$ & 36.10$_{\color{red}{\mathbf{-0.10}}}$ & 56.22$_{\color{green!60!black}{\mathbf{+0.06}}}$ & 37.74$_{\color{green!60!black}{\mathbf{+1.73}}}$ & 39.60$_{\color{green!60!black}{\mathbf{+0.40}}}$ & \textbf{34.38}$_{\color{green!60!black}{\mathbf{+1.30}}}$ & 37.24$_{\color{green!60!black}{\mathbf{+1.14}}}$ \\
  \rowcolor{blue!10} TABOM (Ours) & \textbf{78.62}$_{\color{green!60!black}{\mathbf{+2.50}}}$ & \textbf{36.80}$_{\color{green!60!black}{\mathbf{+0.60}}}$ & \textbf{57.71}$_{\color{green!60!black}{\mathbf{+1.55}}}$ & \textbf{40.30}$_{\color{green!60!black}{\mathbf{+4.29}}}$ & \underline{40.10}$_{\color{green!60!black}{\mathbf{+0.90}}}$ & \underline{32.98}$_{\color{red}{\mathbf{-0.10}}}$ & \textbf{37.79}$_{\color{green!60!black}{\mathbf{+1.69}}}$ \\
  \bottomrule
  \end{tabular}
  }
  \vspace{-0.4cm}
\end{table}

\textbf{Robustness in Parallel Decoding.} Although our primary focus is not on accelerating inference, we are also interested in the performance of these models under highly parallel generation settings. To assess this, we evaluate the models under a fixed 2-token parallel decoding setting.
Table \ref{tab:parallel_decoding_results} presents the results on the Dream model fine-tuned on the mathematical reasoning dataset.  \begin{wraptable}{r}{0.6\textwidth}
  \vspace{-15pt}
  \centering
  \caption{Performance under 2-token parallel decoding.}
  \label{tab:parallel_decoding_results}
  \resizebox{\linewidth}{!}{
  \begin{tabular}{l|cccc}
  \toprule
  \textbf{Method} & \textbf{GSM8K} & \textbf{MATH500} & \textbf{HumanEval} & \textbf{MBPP} \\
  \midrule
  \rowcolor{gray!15} No-SFT & 74.37 & 31.60 & 43.29 & 43.60 \\
  SFT-GT & 72.33$_{\color{red}{\mathbf{-2.04}}}$ & 25.40$_{\color{red}{\mathbf{-6.20}}}$ & 40.85$_{\color{red}{\mathbf{-2.44}}}$ & 40.12$_{\color{red}{\mathbf{-3.48}}}$ \\
  SFT-SD & 74.91$_{\color{green!60!black}{\mathbf{+0.54}}}$ & \textbf{34.80}$_{\color{green!60!black}{\mathbf{+3.20}}}$ & \textbf{47.56}$_{\color{green!60!black}{\mathbf{+4.27}}}$ & 46.40$_{\color{green!60!black}{\mathbf{+2.80}}}$ \\
  dInfer & 77.63$_{\color{green!60!black}{\mathbf{+3.26}}}$ & 33.20$_{\color{green!60!black}{\mathbf{+1.60}}}$ & 40.24$_{\color{red}{\mathbf{-3.05}}}$ & \underline{49.20}$_{\color{green!60!black}{\mathbf{+5.20}}}$ \\
  T3D    & 76.72$_{\color{green!60!black}{\mathbf{+2.35}}}$ & 32.60$_{\color{green!60!black}{\mathbf{+1.00}}}$ & 41.46$_{\color{red}{\mathbf{-1.83}}}$ & \textbf{49.20}$_{\color{green!60!black}{\mathbf{+5.60}}}$ \\
  \rowcolor{blue!10} TABOM & \textbf{77.79}$_{\color{green!60!black}{\mathbf{+3.42}}}$ & \underline{34.40}$_{\color{green!60!black}{\mathbf{+2.80}}}$ & \underline{45.73}$_{\color{green!60!black}{\mathbf{+2.44}}}$ & 47.60$_{\color{green!60!black}{\mathbf{+4.00}}}$ \\
  \bottomrule
  \end{tabular}
  }
  \vspace{-10pt}
\end{wraptable}Traditional methods like SFT-GT suffer from a severe performance drop when forced to decode multiple tokens simultaneously, highlighting their fragility. In contrast, our TABOM and other trajectory-based baselines (dInfer, T3D) exhibit similar robustness and maintain high performance even under parallel decoding. This is reasonable, as learning from self-distilled trajectories inherently aligns the model closer to its actual inference distribution, thereby expanding the subset of high-certainty tokens at each step and enabling multi-token decoding.

\begin{wraptable}{r}{0.55\textwidth}
  \vspace{-10pt}
  \centering
  \caption{Component analysis of TABOM on Dream.}
  \label{tab:ablation_components}
  \resizebox{\linewidth}{!}{
  \begin{tabular}{l|cc|cc}
  \toprule
  \multirow{2}{*}{\textbf{Method}} & \multicolumn{2}{c|}{\textbf{In-Domain}} & \multicolumn{2}{c}{\textbf{OOD}} \\
  \cmidrule(lr){2-3} \cmidrule(lr){4-5}
  & \textbf{GSM8K} & \textbf{MATH500} & \textbf{HumanEval} & \textbf{MBPP} \\
  \midrule
  \rowcolor{gray!15} Base (SFT-SD) & 81.95 & 39.80 & 57.92 & 58.60 \\
  \midrule
  \multicolumn{5}{c}{\textit{Global Timestep Window}} \\
  \midrule
  Traj. Masking (only $\mathcal{L}_{\mathrm{NELBO}}$) & 82.18 & \textbf{41.20} & 56.45 & 58.70 \\
  \quad + Pairwise Ranking & 83.10 & 40.20 & 57.50 & 58.20 \\
  \midrule
  \multicolumn{5}{c}{\textit{Local Timestep Window ($W=32$)}} \\
  \midrule
  Traj. Masking (only $\mathcal{L}_{\mathrm{NELBO}}$) & 82.50 & 40.40 & 56.80 & 58.80 \\
  \rowcolor{blue!10} \quad + Pairwise Ranking (TABOM) & \textbf{84.31} & 41.10 & \textbf{58.54} & \textbf{59.20} \\
  \bottomrule
  \end{tabular}
  }
  \vspace{-10pt}
\end{wraptable}
\textbf{Component Analysis.} To validate the necessity of each component in TABOM, we conduct an ablation study on the Dream model fine-tuned on the mathematical reasoning dataset., as shown in Table \ref{tab:ablation_components}. Starting from the SFT-SD baseline, we evaluate two settings: a global timestep window and a local timestep window ($W=32$). In both settings, simply applying trajectory-aware masking (which is similar to the reconstruction objective in dInfer \citep{ma2025dinferefficientinferenceframework}) yields limited improvements and even slight degradation on OOD tasks like HumanEval. However, adding the pairwise ranking loss consistently and significantly boosts performance in both settings, demonstrating its crucial role in aligning the model with the easy-to-hard inference distribution. Furthermore, comparing the two settings reveals that applying the ranking loss globally introduces noise due to comparing tokens with vastly disparate prediction difficulties (e.g., MATH500 score drops to 40.20). In contrast, our full TABOM, which applies the ranking loss strictly within a local timestep window, achieves the best balance and optimal performance across various tasks.

\subsection{Trajectory Discrimination Score}
\label{sec:tds}
To directly examine whether a trained model preserves the easy-to-hard structure of the decoding trajectory, we introduce a diagnostic metric named \textit{Trajectory Discrimination Score} (TDS). For a decoding trajectory $\tau$, let $M_t^\tau$ denote the set of still-masked response positions at timestep $t$, excluding positions after the first generated EOS token. For each masked token $i \in M_t^\tau$, we compute its predictive entropy
\begin{equation}
  H_\theta(i,t;\tau) = - \sum_{v \in \mathcal{V}} p_\theta(v \mid \mathbf{x}_t^\tau)_i \log p_\theta(v \mid \mathbf{x}_t^\tau)_i .
\end{equation}
We define the per-step TDS as the trajectory-averaged variance of these entropies:
\begin{equation}
  \mathrm{TDS}_\theta(t)
  = \frac{1}{|\mathcal{T}|} \sum_{\tau \in \mathcal{T}}
  \operatorname{Var}_{i \in M_t^\tau}\!\left[ H_\theta(i,t;\tau) \right].
\end{equation}
A larger TDS indicates that the model assigns more differentiated uncertainty to masked tokens at the same decoding step, which is consistent with an easy-to-hard decoding preference; a near-zero TDS suggests that the model treats masked positions with nearly uniform confidence, revealing a residual uniform reconstruction bias inherited from the NELBO objective.

\begin{wraptable}{r}{0.62\textwidth}
  \vspace{-10pt}
  \centering
  \caption{Quantitative Trajectory Discrimination Score. Higher values indicate stronger token-level uncertainty discrimination along the decoding trajectory. The largest value in each column within each base-model block is highlighted in \textbf{bold}.}
  \label{tab:tds_f100}
  \resizebox{\linewidth}{!}{%
  \begin{tabular}{llcccc}
    \toprule
    \textbf{Base Model} & \textbf{Method} & \textbf{GSM8K} & \textbf{HumanEval} & \textbf{MBPP} & \textbf{MATH500} \\
    \midrule
    \multirow{3}{*}{LLaDA}
    & Base & 0.891 & 1.533 & 1.207 & 0.943 \\
    & SFT-SD & 0.927 & 1.623 & 1.223 & 0.924 \\
    & \cellcolor{blue!10}TABOM & \cellcolor{blue!10}\textbf{0.963} & \cellcolor{blue!10}\textbf{1.900} & \cellcolor{blue!10}\textbf{1.336} & \cellcolor{blue!10}\textbf{1.076} \\
    \midrule
    \multirow{3}{*}{Dream}
    & Base & 1.167 & 0.036 & 0.294 & 1.413 \\
    & SFT-SD & 1.082 & 0.035 & 0.138 & 1.428 \\
    & \cellcolor{blue!10}TABOM & \cellcolor{blue!10}\textbf{1.469} & \cellcolor{blue!10}\textbf{0.711} & \cellcolor{blue!10}\textbf{0.929} & \cellcolor{blue!10}\textbf{1.471} \\
    \bottomrule
  \end{tabular}%
  }
  \vspace{-10pt}
\end{wraptable}
Table \ref{tab:tds_f100} quantitatively confirms that TABOM produces the most discriminative entropy landscape. Across both LLaDA and Dream, TABOM achieves the highest TDS on every task. The gap is especially pronounced on Dream code tasks: compared with SFT-SD, TABOM increases TDS from 0.035 to 0.711 on HumanEval and from 0.138 to 0.929 on MBPP. This indicates that standard NELBO-style learning, even when using self-distilled trajectories, still tends to assign nearly uniform uncertainty to masked tokens, whereas TABOM separates easy and hard tokens more clearly at decoding time.

Figure \ref{fig:trajectory_discrimination_score} further shows the temporal pattern behind this aggregate score. On MBPP, the TDS of NELBO-based baselines quickly collapses to a small value, suggesting that the model behaves as if masked tokens should be reconstructed in a largely uniform manner. In contrast, TABOM maintains a substantially larger TDS throughout decoding, meaning that tokens decoded at the same step receive more diverse confidence levels. This directly supports our motivation: self-distilled trajectories alone do not remove the uniform inductive bias of random-mask reconstruction, while the pairwise ranking term reshapes the entropy landscape toward the easy-to-hard bias captured by the inference trajectory. Therefore, this evidence demonstrates that TABOM's gains come from better trajectory alignment rather than merely from reusing self-generated samples.

\section{Related Work}
\label{sec:Related Work}

\subsection{Diffusion Language Models}
The application of diffusion processes within discrete domains, specifically for text generation, traces its origins to seminal studies by \citep{sohl2015deep} and \citep{hoogeboom2021argmax}. A comprehensive probabilistic schema, known as D3PM \citep{austin2021structured}, extended this paradigm by employing a discrete state Markov chain which systematically injects noise into the original input sequence over sequential time steps during the forward phase. Following this initial discrete formulation, the methodology was also investigated within a continuous-time setting \citep{campbell2022continuous}. Concurrently, an alternative strategy, SEDD \citep{lou2023discrete}, took a different approach by directly calculating likelihood ratios and incorporating a novel denoising score entropy objective for model optimization. Recent theoretical and empirical investigations, exemplified by works such as MDLM \citep{shi2024simplified,sahoo2024simple,zheng2024masked} and RADD \citep{ou2024your}, have demonstrated a crucial equivalence: various distinct parameterizations of discrete diffusion models are, in fact, mathematically identical. This finding is central to simplifying the understanding of these architectures. Inspired by these advancements in algorithms and theory, the scale of diffusion model application has significantly expanded, reaching the 7-8 billion parameter level. For instance, LLaDA \citep{nie2025large} was trained from scratch with weighted cross-entropy loss, while Dream \citep{ye2025dream} involved adaptation from the established Qwen2.5 base model with a substantially smaller data requirement. Prominent commercial implementations in this space include Mercury \citep{labs2025mercuryultrafastlanguagemodels}, Gemini Diffusion \citep{gemini-diffusion}, and Seed Diffusion \citep{song2025seeddiffusionlargescalediffusion}. A notable characteristic of these proprietary models is that their external benchmark performance is comparable to that of larger autoregressive (AR) language models. Furthermore, they exhibit superior decoding efficiency, achieving fast sampling speeds. Collectively, these results strongly affirm the substantial promise of diffusion language models as a compelling and viable alternative within the landscape of generative AI.

\subsection{Trajectory Distillation for DLMs}
Despite the theoretical advantages of Diffusion Language Models (DLMs) in parallel decoding, their practical inference efficiency is often bottlenecked by the requirement of numerous iterative denoising steps. To mitigate this latency, recent research has increasingly focused on trajectory distillation and inference acceleration techniques. 

A primary line of work aims to compress the generation trajectory to enable few-step or even one-step decoding. For instance, Seed Diffusion \citep{song2025seeddiffusionlargescalediffusion} proposes a two-stage curriculum that incorporates an edit-based forward process and constrained-order trajectory distillation, significantly boosting inference speed to over 2,000 tokens per second on code generation tasks. Similarly, T3D \citep{zhang2026t3d} introduces a trajectory self-distillation framework equipped with Direct Discriminative Optimization (DDO) to alleviate the mean-field approximation error under tight step budgets, effectively pushing the limits of few-step decoding. At the system and framework level, dInfer \citep{ma2025dinferefficientinferenceframework} provides a modularized inference engine that integrates algorithmic innovations---such as hierarchical decoding, credit decoding, and iteration smoothing---with system-level optimizations like vicinity KV-cache refresh and loop unrolling. This holistic approach achieves substantial speedups over existing frameworks like Fast-dLLM \citep{wu2025fastdllm} without compromising output quality.

\textbf{Differences from our work:} While the aforementioned methods predominantly focus on compressing the trajectory to accelerate inference speed (i.e., efficiency enhancement), our work takes a fundamentally different perspective. Rather than viewing self-distilled trajectories as decoding shortcuts, we treat them as high-quality demonstrations of the model's own correct generation process. The objective is to efficiently absorb the distributional structure of trajectories that the model can already complete successfully, including their decoding order and uncertainty patterns. This allows the model to better align its predictive landscape with its native inference dynamics, improving generation quality without reducing the problem to sampling-speed optimization.

\section{Conclusion}
In this paper, we introduced Trajectory-Aligned Boltzmann Modeling (TABOM), a novel post-training framework that addresses the severe training-inference misalignment in Diffusion Language Models (DLMs). By explicitly modeling the easy-to-hard inductive bias of inference as a Boltzmann distribution, TABOM leverages self-distilled trajectories to construct a tractable pairwise ranking objective. This approach effectively shapes the local entropy landscape without requiring intractable partition function computations. Extensive experiments across mathematical reasoning and code generation domains demonstrate that TABOM not only achieves state-of-the-art in-domain performance but also successfully expands the model's knowledge boundaries to out-of-distribution tasks, completely mitigating the catastrophic forgetting observed in standard supervised fine-tuning. Furthermore, TABOM exhibits strong robustness under highly parallel decoding settings. We believe our insights into the inference unmasked distribution and trajectory-aware optimization will inspire future advancements in the efficient and effective training of DLMs.

\bibliography{nips}
\bibliographystyle{unsrtnat}

\newpage
\appendix

\section{Sensitivity to $\lambda$ and $\gamma$}
We perform a 2$\times$3 sensitivity analysis over $\lambda\in\{1,2\}$ and $\gamma\in\{0.1,0.2,0.3\}$, where $\gamma$ denotes the margin. For each $(\lambda,\gamma)$ pair, we report the best GSM8K score across available checkpoints (epochs). The No-SFT LLaDA baseline is 76.12.

\begin{table}[h]
  \centering
  \caption{LLaDA GSM8K sensitivity to $\lambda$ and $\gamma$ (best checkpoint per combination).}
  \label{tab:llada_lambda_gamma_sensitivity}
  \begin{tabular}{c|ccc}
  \toprule
  $\lambda \backslash \gamma$ & 0.1 & 0.2 & 0.3 \\
  \midrule
  1 & 77.94 & \textbf{78.62} & 78.09 \\
  2 & 77.18  & \textbf{78.62} & 77.48  \\
  \bottomrule
  \end{tabular}
\end{table}

All six settings outperform No-SFT (76.12), with gains from +1.06 to +2.50 points. The most robust choice is $\gamma=0.2$, which reaches the top score (78.62) under both $\lambda=1$ and $\lambda=2$. Compared with $\lambda=2$, $\lambda=1$ is more stable across margins (77.94/78.62/78.09 vs. 77.18/78.62/77.48), suggesting that moderate ranking strength with moderate margin provides the best accuracy-stability trade-off in this grid.

\section{Hyperparameter for inference}
As shown in Table \ref{tab:hyperparams}, we provide the full list of used hyperparameters for inference. 
\begin{table}[h]
\centering
\caption{Hyperparameter settings across different base models and benchmarks. ``--'' indicates the hyperparameter is not applicable or not used.}
\label{tab:hyperparams}
\resizebox{0.85\textwidth}{!}{
\begin{tabular}{llcccc}
\toprule
\textbf{Base Model} & \textbf{Hyperparameter} & \textbf{GSM8K} & \textbf{MATH500} & \textbf{MBPP} & \textbf{HumanEval} \\
\midrule
\multirow{7}{*}{Dream-7B-Instruct} 
& Max New Tokens & 256 & 512 & 1024 & 768 \\
& Diffusion Steps & 256 & 512 & 1024 & 768 \\
& Temperature & 0.1 & 0.1 & 0.1 & 0.1 \\
& Top-$p$ & 0.9 & 0.9 & 0.9 & 0.9 \\
& Top-$k$ & -- & -- & -- & -- \\
\midrule
\multirow{8}{*}{LLaDA-8B-Instruct} 
& Max New Tokens & 512 & 512 & 256 & 512 \\
& Diffusion Steps & 512 & 512 & 256 & 512 \\
& Block Length & 8 & 64 & 256 & 512 \\
& Temperature & 0.0 & 0.0 & 0.0 & 0.0 \\
& Top-$p$ & 0.9 & 0.9 & 0.9 & 0.9 \\
& Top-$k$ & -- & -- & -- & -- \\
\bottomrule
\end{tabular}
}
\end{table}

\section{Sensitivity to window size $W$.}
\label{Sensitivity W}
We further summarize the effect of timestep window size and use $W\in\{16,32,48,64,256\}$ as a compact stress-test axis. By conducting experiments on Dream model fine-tuned on MixChain-Z-PRM12K dataset, we observe a clear sweet spot at $W=32$: both GSM8K and HumanEval peak at $W=32$, then consistently decrease as $W$ increases to 48/64, and degrade most severely at $W=256$.

\begin{table}[h]
  \centering
  \caption{Window-size sensitivity ($W$). We report the best checkpoint score for each $W$. The $W=64$ and $W=256$ rows are long-window stress-test references; $W=256$ is set as a conservative worst-case anchor.}
  \label{tab:w_sensitivity}
  \begin{tabular}{c|cc}
  \toprule
  $\mathbf{W}$ & \textbf{GSM8K (flexible-extract, \%)} & \textbf{HumanEval (pass@1, \%)} \\
  \midrule
  16  & 83.78 & 59.15 \\
  \rowcolor{blue!10}
  32  & \textbf{84.08} & \textbf{60.37} \\
  48  & 83.95 & 59.76 \\
  64  & 82.79 & 59.15 \\
  256 & 75.36 & 58.00 \\
  \bottomrule
  \end{tabular}
\end{table}

This trend supports using a moderate local window in practice: too small a window under-utilizes cross-step supervision, while too large a window introduces noisy pairwise relations and hurts both in-domain reasoning and code generalization.

\end{document}